\newtheorem{theorem}{Theorem}
\newtheorem{example}[theorem]{Example}
\newtheorem{lemma}[theorem]{Lemma}
\newtheorem{claim}[theorem]{Claim}
\newtheorem{definition}[theorem]{Definition}
\newtheorem{remark}[theorem]{Remark}
\newcolumntype{H}{>{\setbox0=\hbox\bgroup}c<{\egroup}@{}}
\DeclareMathOperator{\fimp}{\mathbb{S}}
\DeclareMathOperator{\FA}{\mathcal{F}}
\newcommand{\SB}{\{}%
\newcommand{\SM}{\mid}%
\newcommand{\SE}{\}}%
\newcommand{\complexityClassFont}[1]{\ensuremath{\mathrm{#1}}}
\newcommand{\numberP}{\complexityClassFont{\#\Ptime}\xspace}
\newcommand{\numberDotP}{\complexityClassFont{\#{\cdot}\Ptime}\xspace}
\newcommand{\numberDotCoNP}{\complexityClassFont{\#\cdot\co\NP}\xspace}
\newcommand{\PH}{\text{\complexityClassFont{PH}}\xspace}
\newcommand{\PS}{\ensuremath{\Ptime^{\numberDotP}}\xspace}
\@nx\else[{#1}]\fi}% Replace this code
\begin{document} 

\title{Facets in Argumentation: A Formal Approach to Argument Significance}

\author[1]{Johannes K.\ Fichte \thanks{\texttt{johannes.klaus.fichte@liu.se}}}
\author[2]{Nicolas Fröhlich \thanks{\texttt{nicolas.froehlich@thi.uni-hannover.de}}}
\author[3,4]{Markus Hecher \thanks{\texttt{hecher@mit.edu}}}
\author[1]{Victor Lagerkvist \thanks{\texttt{victor.lagerkvist@liu.se}}}
\author[5]{Yasir Mahmood \thanks{\texttt{yasir.mahmood@uni-paderborn.de}}}
\author[2]{Arne Meier \thanks{\texttt{meier@thi.uni-hannover.de}}}
\author[1]{Jonathan Persson \thanks{\texttt{jonpe481@student.liu.se}}}

\affil[1]{Department of Computer and Information Science, Link\"oping University, Sweden}
\affil[2]{Leibniz Universit\"at Hannover, Germany}
\affil[3]{Univ. Artois, CNRS, France}
\affil[4]{CSAIL, Massachusetts Institute of Technology, USA}
\affil[5]{DICE group, Paderborn University, Germany}

\date{\vspace{-5ex}}  % This suppresses the date

\maketitle

\begin{abstract}
	Argumentation is a central subarea of Artificial Intelligence (AI) for modeling and reasoning about arguments. 
	The semantics of abstract argumentation frameworks (AFs) is given by sets of arguments (extensions) and conditions on the relationship between them, such as stable or admissible.
	Today's solvers implement tasks such as finding extensions, deciding credulous or skeptical acceptance, counting, or enumerating extensions.
	While these tasks are well charted, the area between decision, counting/enumeration and fine-grained reasoning requires expensive reasoning so far.
	We introduce a novel concept (facets) for reasoning between decision and enumeration. 
	Facets are arguments that belong to some  extensions (credulous) but not to all extensions (skeptical).
	They are most natural when a user aims to navigate, filter, or comprehend the significance of specific arguments, according to their needs.
	We study the complexity and show that tasks involving facets are much easier than counting extensions.
	Finally, we provide an implementation, and conduct experiments to demonstrate feasibility.
\end{abstract}

\section{Introduction}
Abstract argumentation~\cite{Dung95a,Bench-CaponDunne07} is a
formalism for modeling and evaluating arguments and its reasoning problems has many applications in artificial intelligence
(AI)~{\cite{AmgoudPrade09a,RagoCocarascuToni18a}}.
The semantics is based on sets of arguments that satisfy certain
conditions regarding the relationship among them, such as being stable
or admissible~\cite{Dung95a}. Such sets of arguments are then called \emph{extensions} of a
framework and {various practical
solvers for decision and reasoning tasks~\cite{EglyGagglWoltran08,NiskanenJarvisalo20,ThimmCeruttiVallati21,Alviano21}
compete biannually in the ICCMA competition~\cite{ThimmEtAl24}}.

Qualitative reasoning problems such as finding an extension or
deciding credulous or skeptical acceptance are reasonably fast to
compute~\cite{Dvorak12a} but have
limitations.
{Namely, these two reasoning modes represent extremes on the reasoning spectrum, as they provide no insight into preferences among arguments for further analysis}.
As a result, enumeration, counting, and fine-grained quantitative
reasoning modes have been studied and computationally
classified~\cite{FichteHecherMahmood23,FichteHecherMeier24} enabling
probabilistic reasoning over arguments.
While enumeration is well suited when the total number of extensions
is small, some argumentation semantics easily result in a vast number
of extensions.
However, users might still want to investigate the space of possible
extensions in more detail. 
Possible examples are restricting or diversifying extensions, identifying resilient arguments or sanity checks, evaluating outcomes in argumentation frameworks generated by LLMs, or gaining insights into specific frameworks through explanations.

In all such scenarios evaluating the \emph{significance} for individual arguments in a framework is central.
In existing proposals, computing the significance for arguments relied on quantitative measures over extensions containing certain arguments or supporting particular claims. 
These notions rely on counting all extensions containing a particular argument (or claim), which is computationally expensive~\cite{FichteHecherMeier24}.
Example~\ref{ex:limits} 
illustrates difficulties when comparing significance of certain
arguments in the overall world of extensions.

\begin{example}\label{ex:limits}
	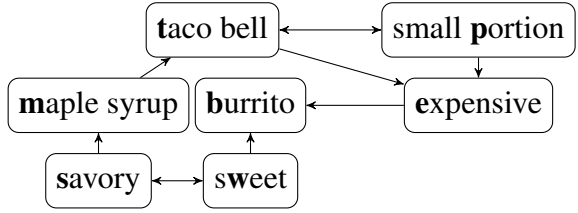
\begin{figure}
	  \centering
	  \begin{tikzpicture}[
		y=1cm,
		baseline=(current bounding box.center),
		every node/.style={rounded corners, draw, inner ysep = 3pt},
		execute at end node={\strut}
		]
		\node at (2, 0) (sweet) {s\textbf{w}eet}; 
		\node at (0, 0) (savory) {\textbf{s}avory};
		\node at (2, 1) (burrito) {\textbf{b}urrito};
		\node at (0, 1) (maple syrup) {\textbf{m}aple syrup};
		\node at (1.5, 2) (taco bell) {\textbf{t}aco bell};
		\node at (5, 1) (expensive) {\textbf{e}xpensive};
		\node at (5, 2) (small portion) {small \textbf{p}ortion};
	
		\draw[stealth'-stealth'] (sweet) -- (savory);
		\draw[-stealth'] (savory) -- (maple syrup);
		\draw[-stealth'] (sweet) -- (burrito);
		\draw[-stealth'] (maple syrup) -- (taco bell);
		\draw[-stealth'] (taco bell) -- (expensive);
		\draw[stealth'-stealth'] (small portion) -- (taco bell);
		\draw[-stealth'] (small portion) -- (expensive);
		\draw[-stealth'] (expensive) -- (burrito);
	  \end{tikzpicture}
	  \caption{An example argumentation framework.}
	  \label{fig:running}
	\end{figure}
  Consider the argumentation framework $F$, depicted in Figure~\ref{fig:running}, illustrating the choice between a sweet or savory breakfast, that is between maple syrup and burrito.
  Intuitively, if one prefers a \emph{savory} flavor, they would not choose \emph{maple syrup}; likewise, \emph{sweet} attacks \emph{burrito}.
  One does not go to \emph{taco bell} expecting \emph{maple syrup}, nor are \emph{small portions} typical at \emph{taco bell}.
  While it is possible to make \emph{burritos} at home, doing so requires buying \emph{expensive} ingredients.
  Making a \emph{small portion} or going to \emph{taco bell} avoids this.
 
  The stable extensions of $F$ are: $\{w, m, p\}$, $\{s, b, p\}$ and $\{s, b, t\}$. 
  Now it is not immediate to compare the significance of accepting/rejecting certain arguments to each other.
\end{example}

In this paper, we propose 
a combination of credulously and skeptically accepted arguments, which
ask whether a given argument belongs to some extension (credulous) but not
all extensions (skeptical).  We call arguments that are credulously
but not skeptically accepted \emph{facets}.
Facets quantify the uncertainty of arguments in extensions, providing a measure of their indeterminacy within the framework.
{They can be utilized to evaluate the significance of 
specific arguments.}

Example~\ref{ex:running} provides a brief intuition.

\begin{example}\label{ex:running}
  We return to the argumentation framework $F$ from Example~\ref{ex:limits}.
  Six of the seven arguments are facets under stable semantics, with only $e$ being a non-facet, indicating a substantial degree of uncertainty.

  Assume we aim to compare the relative significance of an argument.
  Consider the extensions of $F$ {rejecting} (not containing) the argument ``s\textbf{w}eet''. 
  There are two such stable extensions $\{s, b ,p\}$ and $\{s, b, t\}$.
  Therefore, rejecting the argument $w$ leaves us with two facets $p$ and $t$.
  In contrast, consider the extensions \emph{accepting} (containing) the argument ``s\textbf{w}eet''. 
  This results in one stable extension $\{w, m, p\}$ and hence no facets.
  Accepting the argument $w$ eliminates any uncertainty, whereas rejecting $w$ does not.
  Consequently, we consider accepting ``s\textbf{w}eet''
  to be more significant than rejecting ``s\textbf{w}eet''. 
\end{example}

While the computational complexity of credulous and skeptical
reasoning is well studied~\cite{Dvorak12a}, we ask for the concrete
complexity of facets and whether counting facets provides a theoretical benefit
over projected counting and projected enumerating extensions.

\paragraph*{Contributions.} In more details, we establish the following.
\begin{enumerate}
\item We introduce facets to abstract argumentation as a reasoning
  tool for significance and filtering extensions in a
  directed way.  By this, we fill a gap in the literature between
  quantitative and qualitative reasoning.
\item We present a comprehensive complexity analysis for various
  qualitative and quantitative problems involving facets.
  Table~\ref{tab:results} provides an overview on our results. 
\item Finally, we present experiments that demonstrate the feasibility
  of our framework.  We
  evaluate our implementation on instances of the ICCMA competition.
          
\end{enumerate}

\begin{table}
  \centering
  \begin{tabular}{lccc}%
    \toprule%
	Problems/$\sem$ & $\sigma_1 $ & $\sigma_2$ & $\sigma_3 $                       \\
	\midrule 
	$\isfacet_\sigma$ 
                        & $\Ptime^{\footnotesize{\text{ R}\ref{rem:conf-isfacet}}}$ 
                        & $\NP^{\footnotesize{\text{ T}\ref{thm:all-isfacet}}}$ 
                        & ${\SigmaP}^{\footnotesize{\text{ T}\ref{thm:all-isfacet}/\ref{thm:pref-isfacet}}}$

                                                                                       \\
	$\atleastkfacets_\sigma$ 
                        & $\Ptime^{\footnotesize{\text{ T}\ref{thm:conf-boundk}}}$
                        & $\NP^{\footnotesize{\text{ T}\ref{thm:adm-atleastk}}}$
                        & ${\SigmaP}^{\footnotesize{\text{ T}\ref{thm:pref-atleastk}}}$

                                                                                       \\
	$\atmostkfacets_\sigma$ 
                        & $\Ptime^{\footnotesize{\text{ T}\ref{thm:conf-boundk}}}$
                        & $\co\NP^{\footnotesize{\text{ T}\ref{thm:all-atmostk}}}$
                        & ${\PiP}^{\footnotesize{\text{ T}\ref{thm:all-atmostk}}}$

                                                                                       \\
	$\exactkfacets_\sigma$ 
                        & $\Ptime^{\footnotesize{\text{ T}\ref{thm:conf-exactk}}}$
                        & ${\DP}^{\footnotesize{\text{ T}\ref{thm:adm-exactk}}}$ 
                        & $\in{\DP_2}^{\footnotesize{\text{ T}\ref{thm:pref-exactk}}}$ \\

     \midrule
	$\fimp_\sem[F,\ell]$ 
		& ``$\in \Ptime$''$^{\footnotesize{\text{ T}\ref{thm:conf-exactk}}}$
		& ``$\in {\DeltaP 2}$''$^{\footnotesize{\text{ T}\ref{thm:sig}}}$
		& ``$\in {\DeltaP 3}$''$^{\footnotesize{\text{ T}\ref{thm:sig}}}$ \\
	\bottomrule
  \end{tabular}
  \caption{%
    Overview of our complexity results for the semantics~$\sigma_1\in \{\cnf, \nai\}$,
    $\sigma_2\in \{\adm, \stab, \comp\}$, and
    $\sigma_3\in \{\pref, \semi,$ $\stag\}$.
    All results depict completeness except for $\Ptime$-cases or when stated o/w.
    $\isfacet_\sigma$  asks whether a given argument is a facet;
    $\cntopfacets{\cdot}_\sigma$ asks whether there are at least
    ($\geq k$), at most ($\leq k$), or exactly (\mbox{$=k$}) facets.
  	$\fimp_\sem[F,\ell]$ asks what the significance of approving ($\ell = a$) or disapproving ($\ell = \bar a$) of a $\sem$-facet $a$ in AF $F$ is.
    Superscripts behind the complexity classes refer to Remark (R) and Theorem (T),
    with the proof. 
    ``$\in {\DeltaP i}$'' slightly abuses notation
    meaning that it can be computed by
    a deterministic polynomial-time Turing machine with access to a
    $\SigmaPtime{i-1}$ oracle.
  }
  \label{tab:results} %
\end{table}

\paragraph*{Related Work.}
The computational complexity in abstract argumentation is well
understood for decision
problems~\cite{DunneBench-Capon02a,DvorakWoltran10,Dvorak12a},
parameterized complexity involving treewidth~\cite{FichteH0M21}, for (projected) counting~\cite{FichteHecherMeier24} as well as for
fine-grained reasoning based on counting~\cite{FichteHecherMahmood23}.
{The decision complexity} ranges from $\Ptime$ to $\SigmaP$ for
credulous reasoning, and from $\Ptime$ to $\PiP$ for skeptical reasoning.
The complexity of counting extensions ranges between $\numberP$ and $\numberDotCoNP$ depending on
the semantics. In theory, we know that
$\PH \subseteq \PS$~\cite{Toda91} where
$\bigcup_{k \in \mathbb{N}}\DeltaP{k} = \PH$ and
$\NP \subseteq \DeltaP2 = \Ptime^{\NP}$~\cite{Stockmeyer76}.
This renders counting extensions theoretically significantly harder.
Approximate counting is in fact
easier,~i.e.,~$\text{approx}\hy\numberDotP \subseteq \text{BPP}^{\NP}
\subseteq \SigmaPtime3$~\cite{Lautemann1983,Sipser1983,Stockmeyer1983},
but turns out to be still harder than counting facets.
Facets were initially proposed for answer-set programming (ASP)
by~\cite{AlrabbaaRudolphSchweizer18} as a tool to navigate large
solution spaces.
Their computational complexity has been
systematically classified~\cite{RusovacHecherGebser24}.
ASP is a popular framework and problem solving paradigm to model and
solve hard combinatorial problems in form of a logic program that
expresses constraints~\cite{GebserKaufmannSchaub12a}.
Plausibility reasoning has been developed for ASP based on
full counting~\cite{FichteHecherNadeem22}.
\cite{DachseltGagglKrotzsch22} developed a tool to navigate
argumentation frameworks using ASP-facets. 
Note that ASP-navigation is
based on forbidding or enforcing atoms in a program using integrity
constraints.
In contrast, argumentation facets enable approving or disapproving
arguments, while not necessarily removing the extensions entirely
leading to a natural notion of significance of an argument (see
Section~\ref{sec:significance}).
Finally, we remark that our complexity analysis (Table~\ref{tab:results}) indicates a computational gain for reasoning with facets compared to separately asking credulous/skeptical reasoning in each case (see e.g.,~\cite{flap/DvorakD17}).
Facets have recently also been applied to planning~\cite{SpeckHecherGnad25}.

\section{Preliminaries}

We assume familiarity with computational complexity~\cite{DBLP:books/daglib/0092426}, graph theory~\cite{DBLP:books/sp/BondyM08}, and Boolean  logic~\cite{DBLP:series/faia/336}.

\paragraph{Complexity Classes}
We use standard notation for basic complexity classes and for example write $\Ptime$ ($\NP$) for the class of decision problems solvable in (non-deterministic) polynomial time. Additionally, we let $\co \NP$ be the class of decision problems whose complement is in $\NP$, and let $\DP$ be the class of decision problems representable as the intersection of a problem in $\NP$ and a problem in $\co \NP$.
On top, we use more classes from the polynomial hierarchy~\cite{StockmeyerMeyer73,Stockmeyer76,Wrathall76},
$\DeltaP0 \coloneqq \PiPtime0 \coloneqq
\SigmaPtime0 \coloneqq \Ptime$ and $\DeltaP{i} \coloneqq
\Ptime^{\SigmaPtime{i-1}}$, $\SigmaPtime{i} \coloneqq
\NP^{\SigmaPtime{i-1}}$, and $\PiPtime{i} \coloneqq
\co\NP^{\SigmaPtime{i-1}}$ for $i>0$ where $C^{D}$ is the class~$C$ of
decision problems augmented by an oracle for some complete problem in
class $D$.
Recall that $\PH \coloneqq \bigcup_{i \in \mathbb{N}}
\DeltaP{i}$ \cite{Stockmeyer76}.
The canonical NP-complete problem is the Boolean {\em satisfiability} problem for formulas in {\em conjunctive normal form} (CNF), i.e., given $\varphi\dfn\bigwedge_{i=1}^m C_i$ where each $C_i$ is a clause, decide whether $\varphi$ admits at least one satisfying assignment. For $\co \NP$ the corresponding problem is simply to check {\em unsatisfiability}, and for $\DP$ to check whether $\varphi$ is satisfiable and $\psi$ unsatisfiable for a given pair of formulas $(\varphi, \psi)$ (the SAT-UNSAT problem).
The complexity class $\DP_k$ is defined as
$\DP_k\eqdef \SB L_1 \cap L_2 \SM L_1\in \SigmaPtime{k}, L_2 \in
\PiPtime{k}\SE$, $\DP{=}\DP_1$~\cite{LohreyRosowski23}.
For $\PiP$ we may e.g.\ consider the evaluation problem for a {\em quantified Boolean formula} of the form $\forall X \exists Y . \varphi$ where $X$ and $Y$ are two disjoint sets of variables and $\varphi$ a formula in CNF over $X$ and $Y$. For $\SigmaP$ the problem is instead to check that $\forall X \exists Y . \varphi$ is false.

\paragraph{Abstract Argumentation}
We use Dung's argumentation framework~(\cite{Dung95a}) and consider only non-empty and finite sets of arguments~$A$.
An \emph{(argumentation) framework~(AF)} is a directed graph~$F=(A, R)$, where $A$ is a set of arguments and $R \subseteq A\times A$, consisting of pairs of arguments
representing direct attacks between them.
\longversion{Let the \emph{direct attack relationship} between two sets~$E,E'\subseteq A$ of arguments be defined by relations $\rightarrowtail_R$ and $\leftarrowtail_R$ as follows:
$E$ \emph{directly attacks}, i.e.,  $E\rightarrowtail_R E'\eqdef \{a\in E \mid (\{a\}\times E')\cap R \neq \emptyset\}$, 
and $E\leftarrowtail_R E'\eqdef \{a\in E \mid (E' \times \{a\}) \cap R\neq \emptyset\}$. }%
An argument~$a \in E$, is called \emph{defended by $E$ in $F$} if for every $(a', a) \in R$, there exists $a'' \in E$ such that $(a'', a') \in R$.  
The family~$\adef_F(E)$ is defined by $\adef_F(E) \eqdef\{\; a \mid a \in A, a \text{ is defended by $E$ in $F$}  \;\}$.
In abstract argumentation, one strives for computing so-called \emph{extensions}, which are subsets~$E \subseteq A$ of the arguments that have certain properties.
The set~$E$ of arguments is called \emph{conflict-free in~$E$} if $(E\times E) \cap R = \emptyset$; $E$ is \emph{admissible in $F$} if 
(1) $E$ is \emph{conflict-free in $F$}, and
(2) every $a \in E$ is \emph{defended by $E$ in $F$}.
Let $E^+_R\eqdef E\cup\{\, a\mid (b,a)\in R, b \in E\, \}$ and $E$ be conflict-free. 
Then, $E$ is (1) \emph{naive} in $F$ if no $E' \supset E$ exists that is {conflict-free in $F$}, and
(2) \emph{stage in $F$} if there is no conflict-free set~$E'\subseteq A$ in~$F$ with~$E^+_R\subsetneq (E')^+_R$.
An admissible set $E$ is 
(1) \emph{complete in~$F$} if $\adef_F(E) = E$;
(2) \emph{preferred in~$F$}, if no $E' \supset E$ exists that is \emph{admissible in $F$};
(3) \emph{semi-stable in $F$} if no admissible set $E' \subseteq A$ in~$F$ with~$E^+_R\subsetneq (E')^+_R$ exists; and 
(4) \emph{stable in~$F$} if every $a \in A \setminus E$ is \emph{attacked} by some $a' \in E$.
For a semantics~$\sem \in \{\cnf, \nai, \adm, \comp, \stab, \pref,\semi,\stag\}$, we write $\sem(F)$ for the set of \emph{all extensions} of semantics~$\sem$ in $F$.
Let~$F=(A,R)$ be an AF.

Then, the problem $\Exist\sem$ asks if $\sem(F)\neq\allowbreak\emptyset$. 
The problems $\cred{\sem}$ and $\skep{\sem}$ question for 
$a{\,\in\,}A$, whether~$a$ is in some $E\in \sem(F)$ (``\emph{credulously} accepted'') or every~$E\in\sem(F)$  (``\emph{skeptically} accepted''), respectively. 
We let $\Cred\sigma$ (resp., $\Skep\sigma$) denote the set of all credulously (skeptically) accepted arguments under semantics $\sigma$.

\section{Facet Reasoning} 

Central reasoning problems in argumentation include deciding whether an argument is credulously (or skeptically) accepted. 
In the following, we will see how the problem of deciding facets can be seen as a generalization of these two modes. 

We begin by defining reasoning problems pertaining to \emph{facets} in argumentation.
Intuitively, a $\sem$-facet is an argument which is accepted in some, but not all $\sigma$-extensions for the considered semantics $\sigma$.
Formally, given a semantics $\sigma$, then an argument $a$ is a {\em $\sigma$-facet} if $a\in \Cred\sigma\setminus \Skep\sigma$.
Given an AF $F$ and semantics $\sem$, then $\Facet\sem(F)$ denotes the set of all $\sem$-facets in $F$
In this work, we consider the following reasoning problems parameterized by a semantics~$\sem$.
\begin{itemize}
  \item \problemdef{$\isfacet_\sem$}{an AF $F=(A,R)$ and an argument $a\in A$}{is $a$ a $\sem$-facet in $F$}
  \item 
    The problems $\exactkfacets_\sem,\atleastkfacets_\sem$ and $\atmostkfacets_\sem$ has an integer $k$ as an additional input, and ask whether an input $F=(A,R)$ has exactly, at least, or at most $k$ $\sem$-facets, respectively.
    \end{itemize}

We continue by analyzing the complexity of these problems, beginning with $\isfacet_\sem$ in Section~\ref{sec:isfacet}, $\atleastkfacets_\sem$ and $\atmostkfacets_\sem$ in Section~\ref{sec:atleast_atmost}, and complete the study with $\exactkfacets_\sem$ in Section~\ref{sec:exact}.

\subsection{Complexity of Deciding Facets} 
\label{sec:isfacet}

Regarding the complexity classification, for conflict-free and naive semantics, the problem $\isfacet$ is rather straightforward to classify. 
To see this,
	for $\cnf$, each argument not attacking itself is a facet (provided there are at least two such arguments in the AF), and
	for $\nai$, one additionally has to remove each argument not in conflict with any other argument since it can not be a facet. 
\begin{remark}\label{rem:conf-isfacet}
	$\isfacet_\sigma$ is in $\Ptime$ for $\sigma\in \{\cnf,\nai\}$.
\end{remark}

For the remaining semantics, 
we get hardness by observing that $\isfacet_\sigma$ is as hard as the credulous reasoning ($\cred\sigma$) for each considered semantics $\sem$.

\begin{lemma}\label{lem:cred-isfacet}
	Let $\sigma$ be any semantics. Then $\cred\sigma \leq^{\Ptime}_m \isfacet_{\sigma}$.
\end{lemma}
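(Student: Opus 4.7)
The plan is to exhibit a polynomial-time many-one reduction. Given an instance $(F = (A, R), a)$ of $\cred\sigma$, I construct $F' = (A', R')$ by adjoining a fresh twin $a'$ of $a$. Formally, $A' \eqdef A \cup \{a'\}$ and
\[
R' \eqdef R \cup \{(a,a'),(a',a)\} \cup \{(a',b) \mid (a,b) \in R\} \cup \{(b,a') \mid (b,a) \in R\}.
\]
The reduction outputs the instance $(F', a)$. By construction, the bijection on $A'$ swapping $a$ with $a'$ (and fixing every other argument) is an automorphism of $F'$, since $a$ and $a'$ have identical attackers and attackees in $A$ and mutually attack each other. In particular, no conflict-free set of $F'$ contains both $a$ and $a'$.

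First I would establish a correspondence between the $\sigma$-extensions of $F$ and those of $F'$: if $E$ is a $\sigma$-extension of $F$, then $E$ itself is a $\sigma$-extension of $F'$, and if additionally $a \in E$ then $(E \setminus \{a\}) \cup \{a'\}$ is also a $\sigma$-extension of $F'$; conversely every $\sigma$-extension of $F'$ is of one of these two forms. The verification would proceed semantics by semantics: for $\adm$ and $\comp$ the copied outgoing attacks of $a'$ ensure that $a'$ supplies exactly the same defenses as $a$, so admissibility and the fixed-point equation $\adef_{F'}(\cdot)=\cdot$ transfer under the swap; for $\stab$ the swapped extension still attacks every argument outside it, since $a' \to a$ and $a'$ inherits all of $a$'s outgoing attacks; for $\pref$, $\semi$, and $\stag$ the automorphism induces a bijection on the candidate extensions preserving set inclusion and preserving the range $E^+_{R'}$ up to the $a/a'$ swap, so subset- or range-maximality is inherited on both sides.

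With the correspondence in hand, the conclusion is immediate. On one side, if $a$ is credulously accepted in $F$ via some $\sigma$-extension $E \ni a$, then $E$ witnesses $a \in \Cred\sigma(F')$ while $(E \setminus \{a\}) \cup \{a'\}$ is a $\sigma$-extension of $F'$ not containing $a$; hence $a \notin \Skep\sigma(F')$ and thus $a \in \Facet\sigma(F')$. On the other side, if $a \in \Facet\sigma(F')$ then $a \in \Cred\sigma(F')$, and collapsing the witnessing extension by renaming $a'$ back to $a$ yields a $\sigma$-extension of $F$ containing $a$, so $a \in \Cred\sigma(F)$. The construction is clearly computable in polynomial time in $|F|$.

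The main obstacle is the semantics-by-semantics verification of the extension correspondence; $\semi$ and $\stag$ are the most delicate, since one must track how the range $E^+_{R'}$ changes when $a$ is replaced by its twin and confirm that range-maximality is preserved on both sides of the swap.
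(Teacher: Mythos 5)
Your proposal is correct and uses essentially the same reduction as the paper: adjoin a fresh twin $a'$ of $a$ with identical incoming and outgoing attacks plus the mutual attack $\{(a,a'),(a',a)\}$, so that a witnessing extension for $a$ can be swapped to one containing $a'$ instead, making $a$ a facet in $F'$ exactly when it is credulously accepted in $F$. Your write-up just makes explicit (via the $a\leftrightarrow a'$ automorphism and the semantics-by-semantics correspondence) the verification that the paper's proof leaves implicit.
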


\begin{proof}
	We provide a polynomial time many-one reduction from $\cred\sigma$ to $\isfacet_\sem$ for each semantics $\sigma$ as follows.
	Let $F=(A,R)$ be an AF, and $a\in A$ be an argument. 
	Our reduction yields an AF $F'$ where we duplicate the argument $a$ which has all the incoming and outgoing attacks similar to $a\in A$. Precisely, $F'=(A',R')$ is 
        as follows:
	\begin{itemize}
		\item $A' \dfn A\cup\{a'\}$ for a fresh $a'\not\in A$,
		\item $R' \dfn R \cup\{(a,a'),(a',a)\}\cup \{(a',x)\mid (a,x)\in R\}\cup \{(x,a')\mid (x,a)\in R\}$.
	\end{itemize}
	Then, for any semantics $\sigma$, the argument $a$ is credulously accepted under $\sem$ in $F$ iff $a$ is a facet under $\sigma$ in $F'$.
	Indeed, let $a$ be credulously accepted, then there is a $\sigma$-extension $E\subseteq A$ such that $a\in E$. 
	Since $a$ defends itself against $a'$ in $F'$, we have that $a$ is also credulously accepted in $F'$ as the other attacks remain the same.
	Finally, $a$ can not be in all $\sigma$-extensions $E$ in $F'$, as $E\setminus \{a\}\cup \{a'\}$ is also a $\sigma$-extension.
	Therefore the claim follows.
\end{proof}

We specifically obtain the following characterization.

\begin{theorem}\label{thm:all-isfacet}
	$\isfacet_\sigma$ is $\NP$-complete for $\sigma\in \{\adm,\comp, \stab\}$ and $\SigmaP$-complete for $\sigma\in\{\semi,\stag\}$.
\end{theorem}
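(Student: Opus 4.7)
The plan is to treat hardness and membership separately.

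For \textbf{hardness}, in all five cases I would invoke Lemma~\ref{lem:cred-isfacet}, which gives a polynomial-time many-one reduction $\cred\sigma \leq^{\Ptime}_m \isfacet_\sigma$. By the classical classification of credulous acceptance, $\cred\sigma$ is $\NP$-hard for $\sigma\in\{\adm,\comp,\stab\}$ and $\SigmaP$-hard for $\sigma\in\{\semi,\stag\}$, yielding the matching lower bounds for $\isfacet_\sigma$ immediately.

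For \textbf{membership}, I would proceed semantics by semantics. For $\adm$, since $\emptyset$ is admissible in every AF, no argument is ever skeptically accepted, so $\isfacet_\adm$ degenerates to $\cred\adm$ and is in $\NP$. For $\comp$, I would rely on the standard fact that $\Skep\comp(F)$ equals the grounded extension of $F$, which is polynomial-time computable; thus $a$ is a $\comp$-facet iff $a$ is credulously accepted and $a$ is not in the grounded extension. This places $\isfacet_\comp$ in $\NP$: guess a complete extension containing $a$, and verify the grounded-extension check deterministically. For $\stab$, both ``$a$ lies in some stable extension'' and ``$a$ is missing from some stable extension'' are $\NP$ conditions, so I would simply guess two stable extensions $E_1 \ni a$ and $E_2 \not\ni a$ and verify stability in polynomial time.

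For $\semi$ and $\stag$, the same ``guess two candidate extensions'' strategy applies but verification is no longer polynomial: checking whether a given set $E_i$ is semi-stable (respectively stage) requires certifying that no admissible (respectively conflict-free) set $E' \subseteq A$ satisfies $(E_i)^+_R \subsetneq (E')^+_R$, which is a $\coNP$ test. Guessing $E_1, E_2$ and invoking a $\coNP$ oracle to verify each extensionhood then places $\isfacet_\sigma$ in $\NP^{\coNP} = \SigmaP$.

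The main obstacle is the tight upper bound for $\semi$ and $\stag$: the naive ``guess two extensions'' recipe does not land in $\NP$ because extension verification itself is already $\coNP$-hard, so one must ascend to $\SigmaP$ and argue carefully that a single nondeterministic phase followed by two $\coNP$ oracle calls suffices. All other cases are essentially direct combinations of the standard credulous and skeptical complexities under each semantics.
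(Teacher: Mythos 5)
Your proposal is correct and takes essentially the same route as the paper: hardness via Lemma~\ref{lem:cred-isfacet} combined with the known complexity of credulous acceptance, and membership by guessing two extensions, one containing and one omitting the argument, with polynomial-time verification for admissible/complete/stable and a $\co\NP$ oracle for semi-stable/stage. Your shortcuts for the admissible case (the empty set is always admissible, so no argument is skeptically accepted) and the complete case (skeptical acceptance coincides with the grounded extension) are valid but merely replace the paper's uniform guess-two-extensions argument in those easier cases.
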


\begin{proof}
	The hardness in each case follows due to Lemma~\ref{lem:cred-isfacet} and the complexity for credulous reasoning under corresponding semantics.
	
	The membership follows, since one can guess two $\sigma$-extensions for an input $F$, one containing the argument in the question, and another without it. 
	The verification (of $\sigma$-extensions) requires (1) polynomial time for $\sigma\in\{\adm,\comp,\stab\}$, and (2) $\co\NP$-oracle for $\sigma\in \{\semi,\stag\}$. 
	This establishes the 
        membership results.
\end{proof}

The following observation is necessary to establish the hardness proof in Theorem~\ref{thm:pref-isfacet}.
	Let $\Phi = \forall X \exists Y . \varphi$ be a QBF instance, where $\varphi \dfn \bigwedge_{i=1}^nC_i$ is a CNF.
	If $\varphi$ is not satisfiable, the formula $\Phi$ can not be true.
	Whereas, the problem to check  whether $\varphi$ is satisfiable, is $\NP$-complete.
	Given $\Phi$, one can construct a new formula $\Phi'= \forall X' \exists Y .\varphi'$ such that $\varphi'$ is satisfiable and $\Phi$ is true iff $\Phi'$ is true.
	To this aim, we let  $X' = X\cup \{z\}$ for a fresh variable $z\not\in X\cup Y$. 
	Then $\varphi' \dfn \bigwedge_{C\in\varphi}(\neg z\lor C)$.
	Notice that $\Phi[z\mapsto 0]$ is trivially true whereas $\Phi[z\mapsto 1]$ is true iff $\Phi$ is true.
	Consequently, we have the following observation.

	\begin{remark}
		Given a QBF instance $\Phi = \forall X \exists Y . \varphi$, where $\varphi$ is a CNF.
		One can assume w.l.o.g. that $\varphi$ is satisfiable.
	\end{remark}
	Next, we  establish that $\isfacet_\pref$ is $\SigmaP$-complete.
	Notice that this case is not covered by Theorem~\ref{thm:all-isfacet} as the credulous reasoning for preferred semantics ($\cred\pref$) is \emph{only} $\NP$-complete.

\begin{theorem}\label{thm:pref-isfacet}
	$\isfacet_\pref$ is $\SigmaP$-complete.
\end{theorem}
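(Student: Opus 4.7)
My plan is to establish membership in $\SigmaP$ by a routine guess-and-check with a $\co\NP$ oracle, and $\SigmaP$-hardness by a reduction from a canonical $\SigmaP$-complete QBF problem, reusing the machinery behind the $\PiP$-hardness of $\Skep\pref$ and invoking the Remark above. For membership, I would exploit that ``$a$ is a $\pref$-facet of $F$'' decomposes into two independent witnesses: (i)~credulous acceptance $a \in \Cred\pref$, which---since every admissible set extends to a preferred one---amounts to guessing an admissible set $E_1$ with $a \in E_1$ and verifying in polynomial time; and (ii)~non-skeptical acceptance $a \notin \Skep\pref$, which amounts to guessing a set $E_2$ with $a \notin E_2$ and making a single $\co\NP$-oracle call to verify that $E_2$ is preferred (admissibility is polynomial, maximality lies in $\co\NP$). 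Combining these guesses yields an $\NP^{\co\NP}$ procedure, so $\isfacet_\pref \in \SigmaP$.

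For $\SigmaP$-hardness, I would reduce from the canonical $\SigmaP$-complete problem $\exists X \forall Y.\,\neg\varphi$ with $\varphi$ a CNF, i.e., the negation of the $\PiP$-complete QBF $\Phi \dfn \forall X \exists Y.\,\varphi$. The idea is to recycle a known $\PiP$-hardness construction for $\Skep\pref$: such a construction produces an AF $F'$ with a distinguished argument $a$ such that $a$ is skeptically accepted in $F'$ iff $\Phi$ holds, while $a$ is credulously accepted in $F'$ \emph{unconditionally}---i.e., $a$ sits in some preferred extension of $F'$ regardless of the truth of $\Phi$. Under these two properties, $a$ is a $\pref$-facet of $F'$ iff $\Phi$ is false, i.e., iff $\exists X \forall Y.\,\neg\varphi$ holds, which gives $\SigmaP$-hardness. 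Observe that this gap between $\cred\pref$ (in $\NP$) and $\isfacet_\pref$ is \emph{created} by the non-skeptical part, exactly mirroring the paper's remark that credulous reasoning alone is strictly easier than the facet question.

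The main obstacle is engineering ``$a$ credulously accepted in $F'$ unconditionally'' without disturbing the equivalence ``$a$ skeptically accepted in $F'$ iff $\Phi$''. This is exactly where the Remark becomes essential: by replacing $\varphi$ with an equivalent satisfiable CNF, we can always pin down at least one assignment of $Y$ (for an appropriate $X$) that satisfies the inner matrix, which in the AF translates into a concrete preferred extension containing $a$. A typical realization attaches a small self-defending gadget at $a$ (for instance, a mutual attack with a fresh argument $b$) so that each preferred extension contains exactly one of $a$ or $b$, and lets the QBF-encoding component dictate which alternative wins; the delicate check is that this attachment neither manufactures \emph{new} preferred extensions avoiding $a$ (which would spuriously make $a$ a facet even when $\Phi$ holds), nor destroys the QBF-dependent preferred extensions that witness non-skeptical acceptance of $a$ when $\Phi$ fails. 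Verifying this non-interference, together with the correctness of the underlying QBF encoding, constitutes the technical core of the proof.
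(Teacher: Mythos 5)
Your proposal is correct and follows essentially the same route as the paper: membership by guessing witnesses and verifying preferred extensions with an (co-)$\NP$ oracle, and $\SigmaP$-hardness by reusing the known $\PiP$-hardness reduction for skeptical acceptance under preferred semantics (Dvo\v{r}\'ak--Dunne, Reduction~3.7), where the distinguished argument $\varphi$ (with its mutual-attack partner $\bar\varphi$) is credulously accepted unconditionally thanks to the w.l.o.g.\ satisfiability of the inner CNF, so it is a facet iff the QBF is false. The only minor difference is that you certify credulous acceptance via an admissible witness (polynomial check) instead of a second oracle-verified preferred extension, which is a harmless optimization.
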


\begin{proof}
	The membership follows since one can guess two preferred extensions for an input $F$, one containing the argument in the question, and another without it. 
	The verification of $\pref$-extensions requires an $\NP$-oracle. 
	This establishes the mentioned membership results.
	
	For hardness, we utilize the following reduction proving $\PiP$-hardness of skeptical acceptance with preferred semantics~\cite[Reduction 3.7]{flap/DvorakD17}.

	Given a QBF $\Phi = \forall Y \exists Z .\varphi$ where $\varphi \dfn  \bigwedge_{i=1}^mC_i$ is a CNF-formula with clauses $C_i$ over variables $X=Y\cup Z$.
	We construct an AF $F_\Phi = (A,R)$, where 
	$A=\{\varphi,\bar\varphi\}\cup \{C_1,\dots,C_m\}\cup X \cup \bar X$.
	The relation $R$ includes the following attacks:
	\begin{align*}
		&\{\,(C_i,\varphi)\mid 1\leq i\leq m\,\}\cup\\
		&\{\,(x,C_i)\mid x\in C_i\,\}\cup\{\,(\bar x,C_i)\mid \bar x\in C_i\,\}\cup\\
		&\{\,(x,\bar x),(\bar x,x)\mid x\in\var(\varphi)\,\} \cup \\ 
		& \{(\varphi,\bar\varphi), (\bar\varphi,\varphi)\} \cup \{(\bar\varphi,z),(\bar\varphi,\bar z)\mid z\in Z\}.
	\end{align*}
	Then, it holds that there is a preferred extension in $F_\Phi$ not containing the argument $\varphi$ iff the formula $\Phi$ is false.
	Furthermore, we have that $\varphi$ is satisfiable.
	Therefore, there exists a preferred extension $S$ containing $\varphi$. 
	Namely, $S$ corresponds to a satisfying assignment $\theta$ for $\varphi$ as $S=\{\varphi, \ell \mid \ell \in X\cup \bar X, \theta(\ell)=1\}$.
	As a result, $\varphi$ is a $\pref$-facet iff there is a preferred extension $S'$ with $\varphi\not\in S'$ iff the formula $\Phi$ is false.
	This results in $\SigmaP$-hardness.
\end{proof}

\subsection{Atleast/Atmost $k$ Facets Complexity}
\label{sec:atleast_atmost}
	We begin by proving that for conflict-free and naive semantics, one can count all the facets in polynomial time.

\begin{theorem}\label{thm:conf-boundk}
	$\atleastkfacets_\sigma$ and $\atmostkfacets_\sigma$ are both in $\Ptime$ for $\sigma\in \{\cnf,\nai\}$.
\end{theorem}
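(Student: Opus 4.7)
The plan is to reduce the counting questions directly to repeated applications of the polynomial-time facet test already established in Remark~\ref{rem:conf-isfacet}. Since $\isfacet_\sigma$ for $\sigma \in \{\cnf, \nai\}$ is decidable in polynomial time, I would simply iterate through each argument $a \in A$, test whether $a$ is a $\sigma$-facet, and maintain a running count $c$ of the facets found. Once $A$ has been exhausted, answer ``yes'' to $\atleastkfacets_\sigma$ iff $c \geq k$, and ``yes'' to $\atmostkfacets_\sigma$ iff $c \leq k$. The overall running time is $O(|A|)$ calls to the facet test, each of which is polynomial in the size of $F$, so the total time is polynomial.

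To make the algorithm entirely self-contained, I would also spell out the underlying characterizations used by the facet test. For $\cnf$: an argument $a$ is a facet iff $(a,a)\notin R$ and $|\{b \in A \mid (b,b)\notin R\}| \geq 2$ (since with fewer than two non-self-attacking arguments the empty set is the only conflict-free extension, or the single non-self-attacking argument appears in every conflict-free extension in a trivial manner requiring a quick case check). For $\nai$: additionally, $a$ must be in conflict with some other argument, i.e., there must exist $b \neq a$ with $(a,b) \in R$ or $(b,a) \in R$; otherwise $a$ survives as a free element in every maximal conflict-free set. Both conditions are checkable in time linear in $|A| + |R|$ per argument.

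There is no genuine obstacle here: the result is a direct corollary of the fact that the set $\Facet_\sigma(F)$ can be constructed in polynomial time whenever $\isfacet_\sigma$ is polynomial-time decidable. The only care needed is in the boundary cases for $\cnf$ (ensuring the ``at least two non-self-attacking arguments'' caveat is handled) and for $\nai$ (ensuring isolated non-self-attacking arguments are excluded from the facet set), but both are purely syntactic checks on $F$.
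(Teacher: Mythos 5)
Your proposal is correct and takes essentially the same route as the paper: the paper's proof also just counts facets syntactically---the non-self-attacking arguments for $\cnf$, and for $\nai$ additionally discarding the arguments not in conflict with any other argument---and compares that count with $k$, which is exactly your ``run the polynomial facet test on every argument and tally'' algorithm. The one refinement worth noting (which the paper's sketch elides as well) is that for $\nai$ the conflicting partner must itself be non-self-attacking, since an argument whose only conflicts are with self-attackers still lies in every naive extension; this is an equally simple syntactic check and does not affect the polynomial-time bound.
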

\begin{proof}
	For $\cnf$, the AF $F$ has at least $k$ facets if $F$ contains at least $k$ non self-conflicting arguments (without self-attacks).
	For $\nai$, one additionally has to remove arguments ($N$) not in conflict with any other argument since those can not be a facet.
        Thus, $F$ has at least $k$ $\nai$-facets if $F\setminus N$ contains at least $k$ non self-conflicting arguments, where $N$ includes those arguments not participating in any attack.
\end{proof}

The following reduction is essential for proving Lemma~\ref{lem:sat-kfacets} which we utilize later in achieving certain lower bounds.

\begin{definition}[\cite{flap/DvorakD17}]\label{def:translation}
	Let $\varphi\dfn\bigwedge_{i=1}^m C_i$, be a CNF-formula where each $C_i$ is a clause.
	Consider the AF $F_\varphi=(A,R)$ constructed as follows:
	\begin{align*}
		A\coloneqq &\{\varphi, C_1,\dots,C_m\}\cup\{\,x,\bar x\mid x\in\var(\varphi)\} \\
		R \coloneqq &\{\,(C_i,\varphi)\mid i\leq m\,\}\cup \{\,(x,\bar x),(\bar x,x)\mid x\in\var(\varphi)\,\}\\
		&\cup\{\,(x,C_i)\mid x\in C_i\,\}\cup\{\,(\bar x,C_i)\mid \bar x\in C_i\,\}.
	\end{align*}
	We call $F_\varphi$ the argumentation framework of  $\varphi$ {generated} via the standard translation.
\end{definition}

 It is known that $\varphi$ is satisfiable iff the argument $\varphi$ is credulously accepted in $F_\varphi$ under semantics $\sigma\in\{\adm,\comp,\stab\}$.
 We next prove the following intermediate lemma.
 Essentially, the standard translation allows us to characterize exactly the number of facets in $F_\varphi$ based on whether the formula $\varphi$ is satisfiable or not.

 \begin{restatable}{lemma}{satkfacets}\label{lem:sat-kfacets}
	Let $\varphi$ be a CNF-formula involving $m$ clauses and $n$ variables.
	Moreover, let $F_\varphi$ be the AF of $\varphi$ as depicted in Definition~\ref{def:translation} and let $k = 2n+m+1$. 
	Then the following statements are true for every $\sigma\in\{\adm,\comp,\stab\}$.
	\begin{enumerate}
		\item $\varphi$ is satisfiable iff $F_\varphi$ admits exactly $k$ $\sigma$-facets.
		\item $\varphi$ is not  satisfiable iff $F_\varphi$ admits exactly $k-1$ $\sigma$-facets.
	\end{enumerate}
      \end{restatable}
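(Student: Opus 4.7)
The plan is to show that $\Facet\sigma(F_\varphi) = A$ when $\varphi$ is satisfiable and $\Facet\sigma(F_\varphi) = A\setminus\{\varphi\}$ otherwise; since $|A| = 1 + m + 2n = k$, both parts of the lemma then follow by simple counting.

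The first step will be to argue that no argument of $F_\varphi$ is skeptically accepted under any $\sigma\in\{\adm,\comp,\stab\}$. For $\sigma\in\{\adm,\comp\}$ this is immediate: every argument of $F_\varphi$ has an attacker (literals by their complements, each $C_i$ by its own literals, and $\varphi$ by every $C_i$), hence $\adef_{F_\varphi}(\emptyset)=\emptyset$ and $\emptyset$ is itself a complete (hence admissible) extension avoiding every argument. For $\sigma=\stab$ I would exhibit the standard bijection between truth assignments $\alpha\colon\var(\varphi)\to\{0,1\}$ and stable extensions $E_\alpha$ of $F_\varphi$, where $E_\alpha$ collects the literals true under $\alpha$, every $C_i$ that $\alpha$ falsifies, and $\varphi$ iff $\alpha$ satisfies $\varphi$. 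Routine checks confirm $E_\alpha$ is stable and, conversely, that every stable extension arises this way; hence no literal (flip one bit of $\alpha$), no clause-argument $C_i$ (pick a satisfying $\alpha$), and no $\varphi$ (pick a falsifying $\alpha$) lies in every stable extension.

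The second step will be to verify that every argument other than $\varphi$ is credulously accepted under all three semantics. For each literal $\ell$, the singleton $\{\ell\}$ is admissible since its only attacker $\bar\ell$ is counter-attacked by $\ell$; extending $\{\ell\}$ to a preferred superset yields a complete extension containing $\ell$, and any $E_\alpha$ with $\alpha(\ell)=1$ supplies a stable one. For each clause-argument $C_i$, I would pick an assignment $\alpha$ that falsifies $C_i$ -- which exists whenever $C_i$ is not a tautology, an assumption we may make by standard preprocessing -- so that $E_\alpha$ contains $C_i$; since every stable extension is complete and admissible, $C_i$ is credulous under all three semantics. Finally, the statement preceding the lemma already tells us that $\varphi$ is credulously accepted in $F_\varphi$ iff $\varphi$ is satisfiable.

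Combining the two steps: in the satisfiable case every argument is credulous but not skeptical, so $\Facet\sigma(F_\varphi)=A$ has size $k$; in the unsatisfiable case $\varphi$ is not even credulous while all other arguments remain credulous and non-skeptical, so $\Facet\sigma(F_\varphi)=A\setminus\{\varphi\}$ has size $k-1$. The main technical obstacle will be the careful verification of the correspondence $\alpha\mapsto E_\alpha$ for stable semantics -- in particular checking conflict-freeness and the ``attacks everything outside'' property across all three argument types (literals, clause-arguments, and $\varphi$) -- together with handling degenerate CNF inputs (tautological clauses or $\varphi$ itself a tautology) via a normalization that does not affect the (un)satisfiability of $\varphi$.
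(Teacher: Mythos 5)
Your proof is correct and follows the same overall skeleton as the paper's -- identify exactly which arguments are facets ($A$ if $\varphi$ is satisfiable, $A\setminus\{\varphi\}$ otherwise) and then count -- but several steps are executed differently and more economically. The paper treats each class of arguments (clause arguments, literal arguments, $\varphi$) separately: it establishes credulous acceptance of clause arguments by merging gadget sets $\{C_i\}\cup\{\bar\ell\mid \ell\in C_i\}$ into a maximal stable extension, and establishes non-skeptical acceptance by constructing, for a suitable assignment $\theta$, the witness $\theta\cup\{\varphi\}$ or $\theta\cup C$ (with $C$ the clauses falsified by $\theta$) and verifying by hand that this set is admissible, stable, \emph{and} complete. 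You instead dispatch skeptical non-acceptance under $\adm$ and $\comp$ in one stroke by observing that $\emptyset$ is a complete extension of $F_\varphi$ (every argument is attacked), and handle $\stab$ via the correspondence $\alpha\mapsto E_\alpha$; your $E_\alpha$ coincide with the paper's witnesses $\theta\cup C$ and $\theta\cup\{\varphi\}$, but you obtain their admissibility and completeness for free from the standard inclusions $\stab(F)\subseteq\comp(F)\subseteq\adm(F)$ rather than re-verifying them, which is the most technical part of the paper's argument. Credulous acceptance of literals via the admissible singleton $\{\ell\}$ and of clause arguments via $E_\alpha$ for a falsifying $\alpha$ matches the paper in substance, and citing the known equivalence ``$\varphi$ is credulously accepted iff $\varphi$ is satisfiable'' (stated immediately before the lemma) is legitimate; note that once you invoke it, the converse direction of your correspondence (every stable extension is some $E_\alpha$) is not actually needed. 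When writing this up, make the normalization explicit in the same way the paper implicitly does: besides removing tautological clauses, you need $\varphi$ to have at least one clause and no empty clause, since otherwise $\varphi$ or an empty-clause argument is unattacked, $\emptyset$ is no longer complete, and the facet counts under $\comp$ and $\stab$ would shift.
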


\begin{proof}
	First, we assume, w.l.o.g., that for all $1\leq i \leq m$, $C_i\not\equiv \top$.
	As a result, for every $C_i\in\varphi$ there exists an assignment $\theta_i$ such that $\theta_i\not\models C_i$.
	Further, it follows that such $\theta_i$ do not satisfy $\varphi$, and hence $\varphi$ is not tautological.
	Next, let $\theta$ be an arbitrary assignment over $\var(\varphi)$. Then, for each variable $x$, we have that either $x\in\theta$ or $\bar x\in \theta$ and not both.
	Thus, $\theta$ (seen as set of arguments) is conflict-free.
	By slightly abusing the notation, for a literal argument $\ell\in \theta$, we write $\bar\ell = \bar x$ if $\ell=x$ and $\bar\ell = x$ if $\ell = \bar x$.

	We prove the following intermediate claims.

	\begin{claim}\label{claim:clause-facets}
		Every clause argument $C_i\in A$ is a $\sem$-facet in $F_\varphi$ for each $\sigma\in\{\adm,\comp,\stab\}$.
	\end{claim}
	\noindent\textbf{Proof of Claim.}
	We prove the following two statements. That is, every clause argument $C_i\in A$ is contained in some $\sem$-extension, but not all $\sem$-extensions.

	\textbf{(C1) Every $C_i$ is credulously accepted under $\sem$.}
	This holds since  $\{C_i\}\cup \{\bar\ell\mid \ell\in C_i\}$ is an admissible set of arguments.
	Therefore each clause $C_i$ is credulously accepted under admissible semantics.
	Furthermore, we can \emph{combine} multiple clauses together with their non-satisfying assignments to yield results for complete and stable semantics.
	To this aim, let $S_i=\{C_i\}\cup \{\bar\ell\mid \ell\in C_i\}$.
	Consider a clause $C_j$ such that $C_j\cap S_i =\emptyset$.
	This is the case iff $C_j$ does not contain any literal $\ell$ with $\bar\ell\in C_i$.
	Then, for $S_j= \{C_j\}\cup \{\bar\ell \mid \ell\in C_j\}$, the set $S_i\cup S_j$ is conflict-free.
	We repeat this process to get a maximal extension $S_m$ such that: each clause $C_k\in\varphi\setminus S_m$ contains a literal $\ell\in S_m$, and hence it is attacked by $S_m$.
	Now consider the literals $\ell$ such that $\ell,\bar\ell\not\in S_m$. 
	Recall that $\ell,\bar \ell\not \in C$ for any $C\in S_m$, thus both literals are conflict free with $S_m$, and we get two stable extensions by taking both literals in turn (i.e., $S_m\cup\{\ell\}, S_m\cup\{\bar\ell\}$).
	Since one can construct such an admissible extension for each clause, and those extensions can be merged together to get a stable (hence also complete) extension, we have that each $C_i$ is in some $\sigma$-extension for each $\sigma\in\{\adm,\comp,\stab\}$.

	\textbf{(C2) No $C_i$ is skeptically accepted under $\sem$.}
	Let $\theta$ be an arbitrary assignment such that $\theta\models C_i$.
	Recall that $\theta$ (seen as set of arguments) is conflict-free as either $x\in\theta$ or $\bar x\in \theta$ (and not both) for each variable $x$.
	We have the following 
        cases:\\
	(I.) $\theta\models \varphi$.
	Then $\theta\cup\{\varphi\}$ is a $\sem$-extension not containing the argument $C_i$. \\
	(II.) $\theta\not\models \varphi$.
	Then, the set $C= \{C_j\mid \theta\not\models C_j\}$ of clauses is non-empty and $C_i\not\in C$.
	Moreover, the set of arguments $\theta \cup C$ is conflict-free since no $C_j\in C$ is attacked by any literal $\ell\in \theta$ (i.e., there is no $\ell\in \theta \cap C_j$ for any $C_j\in C$).
	The set $\theta \cup C$ is also admissible as any $C_j\in C$ is defended against the attacker $\ell \in C_j$ by $\bar \ell \in \theta$ (recall that $\theta$ either contains $x$ or $\bar x$ for every variable $x$).

	We next observe that $\theta \cup C$ is also stable, as
	(i.) every literal $\ell\not\in\theta$ is attacked by $\bar\ell\in\theta$,
	(ii.) every clause not in $C$ is attacked by its satisfying literal in $\theta$
	(iii.) $\varphi$ is attacked by each clause in $C$, which is non-empty.

	We next prove that $\theta \cup C$ is also complete.
	That is, $\theta\cup C$ contains every argument it defends.
	Note that
	(i.) the case for literals is trivial,
	(ii.) let $C_j$ be a clause defended by $\theta\cup C$.
	Since arguments in $C$ do not attack any literal, and those literals are the only attackers of $C_j$, we have that $C_j$ is actually defended by $\theta$.
	In other words, for each $\ell \in C_j$ we have that $\bar\ell \in \theta$ since literals are only attacked by their opposites. But this results in $\theta\not\models C_j$, and hence $C_j\in C$.
	(iii.) $\varphi$ is not defended by $\theta \cup C$ as there are unsatisfied clauses in $C$.
	Thus $\theta\cup C$ contains every argument it defends, making it a complete extension.

	From (I.) and (II.), we conclude that for each $C_i\in A$, there is a $\sem$-extension for $\sem\in\{\adm,\comp,\stab\}$ that does not contain $C_i$.

	\begin{claim}\label{claim:literal-facets}
	Every literal argument $\ell\in A$ is a $\sem$-facet in $F_\varphi$ for each $\sigma\in\{\adm,\comp,\stab\}$.
	\end{claim}
	\noindent\textbf{Proof of Claim.}
	Let $\theta$ be an arbitrary assignment.
	Recall that $\theta$ is a conflict-free set of arguments in $F_\varphi$.
	It suffice to prove that every literal in $\theta$ is credulously accepted in $F_\varphi$ under semantics $\sem$.
	That every literal in $\theta$ is a $\sem$-facet, follows by considering the \emph{dual} assignment $\bar\theta=\{\bar \ell \mid \ell\in\theta\}$ for $\theta$.
	First, we distinguish the following two cases.

	(I.) $\theta\models \varphi$.
	Then, $\theta \cup \{\varphi\}$ is a $\sem$-extension for each $\sem\in \{\adm,\comp,\stab\}$.

	(II.) $\theta\not\models \varphi$.
	Then, the set $C= \{C_i\mid \theta\not\models C_i\}$ of clauses is non-empty.
	Moreover, the set of arguments $\theta \cup C$ is conflict-free since no $C_i\in C$ is attacked by any literal $\ell\in \theta$ (i.e., there is no $\ell\in \theta \cap C_i$ for any $C_i\in C$).
	The set $\theta \cup C$ is also admissible as any $C_i\in C$ is defended against the attacker $\ell \in C_i$ by $\bar \ell \in \theta$ (recall that $\theta$ either contains $x$ or $\bar x$ for every variable $x$).
	The claims regarding stable and complete semantics use the same argument as in the proof of Claim~\ref{claim:clause-facets}.
	As a result, every literal $\ell\in \theta$ is contained in the $\sem$-extension $\theta\cup C$ and, is thus credulously accepted.

\begin{claim}\label{claim:phi-facets}
	The argument $\varphi\in A$ is a $\sem$-facet in $F_\varphi$ for each $\sigma\in\{\adm,\comp,\stab\}$iff the formula $\varphi$ is satisfiable.
\end{claim}
	\noindent\textbf{Proof of Claim.}
	If $\varphi$ is not satisfiable, then $\varphi$ does not belong to any $\sem$-extension in $F_\varphi$. As a result, $\varphi$ is not a facet.

	Suppose $\varphi$ is satisfiable and let $\theta$ be an assignment such that $\theta\models\varphi$.
	Then, $\theta\cup\{\varphi\}$ is a $\sem$-extension as established in Claim~\ref{claim:clause-facets} (C2).
	Now, due to Claim~\ref{claim:clause-facets} (C1), there is a $\sem$-extension covering each clause argument and hence not containing $\varphi$.
	As a result, $\varphi$ is a $\sem$-facet in $F_\varphi$.

	Combining Claims~(\ref{claim:clause-facets}--\ref{claim:phi-facets}), we achieve that (1) every argument $A\setminus\{\varphi\}$ is a $\sem$-facet and (2) the argument  $\varphi\in A$ is a $\sem$-facet iff $\varphi$ is satisfiable.
	As a result, $F_\varphi$ has exactly (1) $k$ $\sem$-facets iff $\varphi$ is satisfiable and (2) $k-1$ $\sem$-facets otherwise.

	Consequently the lemma follows since $|A|=k$.
\end{proof}

We continue with $\atleastkfacets_\sigma$ and prove that it is either $\NP$-complete or $\SigmaP$-complete depending on the choice of $\sigma$.

\begin{theorem}\label{thm:adm-atleastk}
	$\atleastkfacets_\sigma$ is $\NP$-complete for $\sigma\in \{\adm,\stab,\comp\}$.
\end{theorem}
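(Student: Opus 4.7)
The plan is to handle the two directions (membership and hardness) essentially independently, relying on Lemma~\ref{lem:sat-kfacets} for the hardness direction and on a direct guess-and-check argument for membership.

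For membership in $\NP$, I would use the characterization that an argument $a$ is a $\sigma$-facet iff there exist two $\sigma$-extensions $E_a^+, E_a^-$ with $a \in E_a^+$ and $a \notin E_a^-$. The nondeterministic algorithm first guesses a set $S \subseteq A$ of size $k$ of purported facets, and then for each $a \in S$ guesses a pair of $\sigma$-extensions $(E_a^+, E_a^-)$ certifying that $a$ is a facet. For $\sigma \in \{\adm, \stab, \comp\}$, verifying that a given set is a $\sigma$-extension is polynomial-time, so the total verification time is polynomial in $|F| + k \cdot |A|$, giving an $\NP$ upper bound.

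For $\NP$-hardness, I would reduce from \textsc{Sat}. Given a CNF formula $\varphi$ with $n$ variables and $m$ clauses, construct the AF $F_\varphi$ via the standard translation in Definition~\ref{def:translation} and set $k \coloneqq 2n + m + 1$. By Lemma~\ref{lem:sat-kfacets}, $F_\varphi$ has exactly $k$ many $\sigma$-facets if $\varphi$ is satisfiable and exactly $k-1$ many $\sigma$-facets if $\varphi$ is unsatisfiable. Consequently, $F_\varphi$ has at least $k$ $\sigma$-facets iff $\varphi$ is satisfiable. Since the reduction is clearly polynomial-time computable, this proves hardness for every $\sigma \in \{\adm, \stab, \comp\}$ in a uniform way.

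There is no real obstacle, since Lemma~\ref{lem:sat-kfacets} was already engineered to count facets exactly on the nose so that the ``satisfiable'' and ``unsatisfiable'' cases differ by exactly one. The only care needed is to note that the same bound $k = 2n+m+1$ works simultaneously for all three semantics (which is exactly what the lemma provides), and to observe that in the membership argument one does not need to independently witness credulous and skeptical non-membership for each facet candidate — two extensions per candidate suffice and the guesses can be made in parallel.
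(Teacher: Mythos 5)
Your proposal is correct and matches the paper's proof essentially verbatim: membership by guessing $k$ distinct candidate facets together with $2k$ polynomial-time-verifiable $\sigma$-extensions witnessing credulous acceptance and skeptical non-acceptance, and hardness via the standard translation with the threshold $k=2n+m+1$ from Lemma~\ref{lem:sat-kfacets}, so that satisfiability of $\varphi$ corresponds exactly to having at least $k$ facets.
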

\begin{proof}
	For membership, we guess $k$ distinct arguments $\{a_1,\ldots,a_k\}$, and simultaneously $2k$ $\sigma$-extensions $P_1,\dots,P_k, N_1,\dots,N_k$ such that: $a_i\in P_i$ and $a_i\not\in N_i$.
	The verification that each $S\in \{P_i,N_i \mid i\leq k\}$ is a $\sigma$-extension can be done in polynomial time.
	Then, $F$ has at least $k$ $\sigma$-facets iff each argument in $\{a_i\mid i\leq k\}$ is a $\sigma$-facet.
	
	For hardness, we utilize Lemma~\ref{lem:sat-kfacets}.
	Indeed, $\varphi$ is satisfiable iff the AF $F_\varphi$ has at least $k$ facets where $k= 2n+m+1$ for the formula $\varphi$ with $n$ variables and $m$ clauses.
\end{proof}

\begin{theorem}\label{thm:pref-atleastk}
	$\atleastkfacets_\sigma$ is $\SigmaP$-complete for $\sigma\in \{\pref,\semi,\stag\}$.
\end{theorem}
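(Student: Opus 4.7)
The plan is to mirror Theorem~\ref{thm:adm-atleastk} but invoke QBF-based constructions matching the higher complexity of these three semantics, rather than the SAT-based Lemma~\ref{lem:sat-kfacets}.

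\emph{Membership.} I would guess $k$ distinct arguments $a_1,\ldots,a_k$ and, simultaneously, $2k$ candidate $\sigma$-extensions $P_1,N_1,\ldots,P_k,N_k$ with $a_i\in P_i$ and $a_i\notin N_i$. For $\sigma\in\{\pref,\semi,\stag\}$, verifying that a guessed set is a $\sigma$-extension amounts to checking polynomial conditions (conflict-freeness, admissibility) plus a $\co\NP$ maximality or range-maximality condition, which needs only a single $\co\NP$ oracle call. Accepting iff every $a_i$ is a $\sigma$-facet then places the procedure in $\NP^{\co\NP}=\SigmaP$.

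\emph{Hardness.} The plan is to reduce from the $\SigmaP$-complete problem ``$\Phi$ is false'', where $\Phi=\forall Y\exists Z.\varphi$ is a QBF. For $\sigma=\pref$ we reuse the reduction of Theorem~\ref{thm:pref-isfacet}; for $\sigma\in\{\semi,\stag\}$ we invoke the analogous QBF reductions from~\cite{flap/DvorakD17}. Given the resulting AF $F_\Phi=(A,R)$ we set $k=|A|$. The crucial claim is that (i) every $a\in A\setminus\{\varphi\}$ is a $\sigma$-facet of $F_\Phi$ independently of whether $\Phi$ holds, whereas (ii) by the analysis of $\isfacet_\sigma$ already established, the distinguished argument $\varphi$ is a $\sigma$-facet iff $\Phi$ is false. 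Combining these yields that $F_\Phi$ has at least $|A|$ $\sigma$-facets iff $\Phi$ is false, proving $\SigmaP$-hardness.

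\emph{Main obstacle.} Item~(i) is the non-routine step. For literal arguments $x,\bar x$, symmetry between an assignment and its dual---together with the w.l.o.g.\ assumption that $\varphi$ is satisfiable---provides both an extension containing and one excluding any chosen literal. For a clause argument $C_i$, an extension excluding $C_i$ comes from a satisfying assignment of $\varphi$ adjoined with $\varphi$ itself, while one containing $C_i$ is built from an assignment $\theta_i\not\models C_i$ together with the clauses that $\theta_i$ falsifies, exactly in the spirit of Lemma~\ref{lem:sat-kfacets}. The auxiliary argument $\bar\varphi$ appearing in the $\pref$-reduction needs a dedicated witness pair: one preferred extension containing $\bar\varphi$ paired with any $Y$-assignment, and one omitting $\bar\varphi$ given by a credulous witness for~$\varphi$. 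The delicate point across all three cases is verifying \emph{maximality} for $\pref$ and \emph{range-maximality} for $\semi$ and $\stag$ of these candidate extensions; here the specific structure of the reductions---in particular the attacks from $\bar\varphi$ to the $Z$-variables in the $\pref$ case and the corresponding gadgets in the $\semi,\stag$ reductions---is what ensures that the candidate extensions are indeed $\sigma$-extensions uniformly across the three semantics.
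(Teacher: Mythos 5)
Your membership argument matches the paper's. The hardness part, however, has a genuine gap. You reduce directly from QBF falsity via the reduction of Theorem~\ref{thm:pref-isfacet} with $k=|A|$, and the whole argument hinges on claim~(i): that every argument of $F_\Phi$ other than $\varphi$ is a $\sigma$-facet regardless of the truth of $\Phi$. For the preferred case this is false as stated. In $F_\Phi$ the argument $\bar\varphi$ attacks every $Z$-literal argument, so any admissible set containing a $Z$-literal $z$ must defend $z$ against $\bar\varphi$, and the only attacker of $\bar\varphi$ is $\varphi$; hence such a set must contain $\varphi$ together with a set of literals satisfying \emph{all} clauses and containing $z$. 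So $z$ is credulously accepted (let alone a facet) only if $\varphi$ has a satisfying assignment with $z$ true --- which mere satisfiability of $\varphi$ (the paper's w.l.o.g.\ assumption) does not guarantee; take $\varphi$ with a unit clause $\neg z$. Your ``symmetry between an assignment and its dual'' works in the standard translation of Lemma~\ref{lem:sat-kfacets}, where literals are attacked only by their complements, but it breaks in $F_\Phi$ precisely because of the $\bar\varphi$-to-$Z$ attacks. Consequently, when $\Phi$ is false the framework may still have fewer than $|A|$ facets, and the reduction gives wrong answers. The gap is repairable (e.g., by padding so that every $Z$-literal occurs in some satisfying assignment), but you neither notice the problem nor supply the fix; and for $\semi$ and $\stag$ you only gesture at ``analogous reductions'' without specifying them or verifying the facet counts, which is exactly the delicate (range-)maximality analysis your own plan identifies as the crux.

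The paper avoids all of this by reducing from $\isfacet_\sigma$ itself, which is already $\SigmaP$-complete for $\sigma\in\{\pref,\semi,\stag\}$ by Theorems~\ref{thm:all-isfacet} and~\ref{thm:pref-isfacet}: given $F=(A,R)$ with $|A|=n$ and query argument $a$, it adds $n-1$ fresh copies of $a$ carrying the same incoming and outgoing attacks but not attacking one another, and shows (Claims~\ref{claim:conf} and~\ref{claim:copies}) that $a$ is a $\sigma$-facet in $F$ iff all $n$ copies are $\sigma$-facets in $F'$, i.e., iff $F'$ has at least $n$ facets. This works uniformly for all three semantics and never requires analyzing the facet structure of a QBF gadget; if you want to salvage your route, you would need to carry out precisely the per-argument facet analysis for each of the three reductions that the paper's argument makes unnecessary.
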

\begin{proof}
	For membership, we guess $k$ distinct arguments $\{a_1,\ldots,a_k\}$, and simultaneously $2k$ $\sigma$-extensions $P_1,\dots,P_k, N_1,\dots,N_k$ such that: $a_i\in P_i$ and $a_i\not\in N_i$.
	The verification that each $S\in \{P_i,N_i \mid i\leq k\}$ is a $\sigma$-extension can be done via an $\NP$-oracle.
	Then, $F$ has at least $k$ $\sigma$-facets iff each argument in $\{a_i\mid i\leq k\}$ is a $\sigma$-facet.
	This yields membership in $\NP^{\NP}$ (equivalently, $\SigmaP$).
	
	For hardness, we reduce from $\isfacet_\sem$ for $\sem \in \{\pref,\semi,\stag\}$.
	To this aim, let $F=(A,R)$ be an AF and $a\in A$ be an argument in the question.
	Assume that $|A|=n$.
	We let $n-1$ additional copies of $a$ and consider the set  $C_a = \{a_1,\dots,a_n\}$ of arguments where $a_1=a$ and $a_i\not\in A$ are fresh arguments for $i\geq 2$.
	Then, we construct the AF $F' = (A',R')$ where $A'= A\cup C_a$.
	The relation $R'$ consists of $R$ and additionally the following attacks:
	$\{(a_i,x) \mid (a,x)\in R, i\leq n\} \cup \{(x,a_i) \mid (x,a)\in R, i\leq n\}$.
	That is, $F'$ simply copies the argument $a$ together with all its incoming and outgoing attacks for each of the $n-1$ fresh arguments.
	
	We first prove that for a conflict-free (resp., admissible) set $S$ in $F$, adding arguments from $C_a$ to $S$ does not change its conflict-freeness (admissibility) in $F'$ as long as $S$ contains $a$.
        
	\begin{claim}\label{claim:conf}
		A set $S\subseteq A$ containing $a$ is conflict-free  (resp., admissible) in $F$ iff $S\cup C_a$ is conflict-free (admissible) in $F'$.
	\end{claim}
	\noindent\textbf{Proof of Claim.}
	We prove the case for conflict-freeness, the case for admissible semantics follows analogously.
	If $S$ is not conflict-free in $F$ then $S$ is also not conflict-free in $F'$ since $R\subseteq R'$.
	Conversely, suppose $S$ is conflict-free in $F$.
	Suppose to the contrary, there exists $x,y\in S\cup C_a$ such that $(x,y)\in R'$.
	Recall that $C_a$ is conflict-free in $F'$ by definition and $S$ is conflict-free in $F'\setminus C_a$.
	Then, it must be the case that $x\in S$ and $y\in C_a$ (or vise versa).
	But this leads to a contradiction to the conflict-freeness of $S$ since $a\in S$ and $(x,a_i)\in R'$ iff $(x,a)\in R$.
	Analogous case holds if $x\in C_a$ and $y\in S$. Thus $S\cup C_a$ is conflict-free in $F'$.

	We are now ready to prove the following claim.
	\begin{claim}\label{claim:copies}
		The argument $a$ is a $\sigma$-facet in $F$ iff each argument $a_i\in C_a$ is a $\sem$-facet in $F'$ for each $\sem \in \{\pref, \semi,\stag\}$.
	\end{claim}
	\noindent\textbf{Claim Proof.}
	Suppose $a$ is a $\sem$-facet in $F$.
	Then, there are $\sem$-extensions $S_1,S_2$ in $F$ such that $a\in S_1$ and $a\not \in S_2$.
	Then, we prove that each argument in $C_a$ belongs to some, but not all $\sem$-extensions of $F'$.
	Notice first that $S_1$ is not a $\sem$-extension in $F'$ for any $\sem\in\{\pref,\semi,\stag\}$.
	This holds due to the arguments in $C_a$.
	Indeed, if $S_1$ is a $\sem$-extension in $F$, then $S_1 \cup \{x \mid x\in C_a\}$ is a counter-example to $S_1$ being $\sem$-set in $F'$ due to Claim~\ref{claim:conf}. 
	
	\textbf{SomeE:} $S_1\cup C_a$ is a $\sem$-extension containing each $a_i\in C_a$ (again, due to Claim~\ref{claim:conf}).

	\textbf{NotAllE:} We prove that $S_2$ is a $\sem$-extension in $F'$ and $a_i\not \in S_2$ for each $a_i\in C_a$.
	We prove the claim for preferred semantics, other cases can be proven analogously.
	Since $S_2$ is a subset maximal admissible set in $F$ and $a\not\in S_2$, either $S_2 \cup \{a\}$ is not conflict-free, or not admissible in $F$.
	Consequently, either $S_2\cup \{a_i\}$ is not conflict-free, or not admissible (by Claim~\ref{claim:conf}).
	Since $A'\setminus A = C_a$, this proves that $S_2$ is a preferred set in $F'$.
	Similar arguments (with Claim~\ref{claim:conf}) yield results for the remaining two semantics.
	As a result, each $a_i \in C_a$ is a $\sem$-facet since $a_i\not\in S_2$ for each $a_i\in C_a$.

	Conversely, suppose $a$ is not a $\sigma$-facet in $F$.
	If $a$ is not in contained in any $\sem$-extension of $F$, then no argument from $C_a$ can be in any $\sem$-extension of $F'$ (using the same argument as in \textbf{SomeE}).
	Hence, no argument $a_i\in C_a$ is a $\sem$-facet in $F'$.
	Similarly, if $a$ belongs to every $\sem$-extension of $F$, once again we have that every $a_i\in C_a$ is contained in every $\sem$-extension in $F'$ (due to \textbf{NotAllE}).
	This results once again in no argument $a_i\in C_a$ being a $\sem$-facet in $F'$.

	We next observe that the argument $a$ is a $\sigma$-facet in $F$ iff the AF $F'$ admits at least $n$ $\sem$-facets for each $\sem \in \{\pref, \semi,\stag\}$.
	Indeed, $a$ is a $\sem$-facet in $F$ iff each $a_i\in C_a$ is also a facet in $F'$ due to Claim~\ref{claim:copies}. 
	Thus resulting in at least $n$ facets in $F'$.
	In contrast, if $a$ is not a facet in $F$ then no argument in  $a_i\in C_a$ is a facet.
	Hence $F'$ has at most $n-1$ facets.
\end{proof}

	It is worth remarking that the reduction from $\isfacet_\sem$ to $\atleastkfacets_\sem$ presented in the proof of Theorem~\ref{thm:pref-atleastk} does not work for admissible semantics.
	This holds since the converse direction of Claim~\ref{claim:copies} `if $a$ is not a facet in $F$ then no argument $a_i\in C_a$ is a facet in $F'$' is no longer true.
	Suppose $a$ is not a facet.
	Assume further that $a$ belongs to all admissible sets and there is at least one such set. 
	Now, take any admissible set $S$ in $F$.
	Clearly, $a\in S$, however, $S\setminus \{a\}\cup \{a_i\}$ is admissible in $F'$ and does not contain $a$.
	This results in every $a_i\in C_a$ being $\adm$-facet in $F'$.
	Consequently, $a$ is not an admissible-facet in $F$ although each of its copy in $F'$ is an admissible-facet.
	Moreover, $F'$ also admits at least $n-1$ $\adm$-facets in this case, thus violating the proof from Theorem~\ref{thm:pref-atleastk}.

 \begin{restatable}{theorem}{allatmostk}\label{thm:all-atmostk}
	$\atmostkfacets_\sigma$ is $\co\NP$-complete for $\sigma\in \{\adm,\stab,\comp\}$, whereas $\PiP$-complete for $\sigma\in \{\pref,$ $\semi,$ $\stag\}$.
\end{restatable}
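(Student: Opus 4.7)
The plan is to treat membership uniformly and then handle hardness by leveraging the reductions already developed in the paper.

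For membership, I would observe that $\atmostkfacets_\sigma$ is essentially the complement of asking whether there are at least $k+1$ facets, up to an off-by-one adjustment. More precisely, $F$ has at most $k$ $\sigma$-facets iff $F$ does not have at least $k+1$ $\sigma$-facets. Applying Theorem~\ref{thm:adm-atleastk} yields membership in $\co\NP$ for $\sigma\in\{\adm,\stab,\comp\}$, and applying Theorem~\ref{thm:pref-atleastk} yields membership in $\PiP$ for $\sigma\in\{\pref,\semi,\stag\}$.

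For $\co\NP$-hardness under $\sigma\in\{\adm,\stab,\comp\}$, I would reuse the standard translation from Definition~\ref{def:translation} and invoke Lemma~\ref{lem:sat-kfacets}. Given a CNF $\varphi$ with $n$ variables and $m$ clauses, set $k := 2n+m$. Lemma~\ref{lem:sat-kfacets} gives that $F_\varphi$ has exactly $k+1$ $\sigma$-facets when $\varphi$ is satisfiable and exactly $k$ $\sigma$-facets when $\varphi$ is unsatisfiable. Hence $F_\varphi$ has at most $k$ $\sigma$-facets iff $\varphi$ is unsatisfiable, yielding a polynomial many-one reduction from UNSAT. Since UNSAT is $\co\NP$-complete, this establishes $\co\NP$-hardness.

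For $\PiP$-hardness under $\sigma\in\{\pref,\semi,\stag\}$, I would reuse the construction from the proof of Theorem~\ref{thm:pref-atleastk}, which duplicates a queried argument $a$ into $n = |A|$ copies $C_a$. By Claim~\ref{claim:copies} in that proof, $a$ is a $\sigma$-facet in $F$ iff every $a_i \in C_a$ is a $\sigma$-facet in $F'$, in which case $F'$ has at least $n$ $\sigma$-facets; conversely, if $a$ is not a $\sigma$-facet, then no $a_i\in C_a$ is a facet in $F'$, and $F'$ has at most $n-1$ facets. Thus $F'$ has at most $n-1$ $\sigma$-facets iff $a$ is \emph{not} a $\sigma$-facet in $F$. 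This is a reduction from the complement of $\isfacet_\sigma$; since $\isfacet_\sigma$ is $\SigmaP$-complete for $\sigma\in\{\pref,\semi,\stag\}$ by Theorems~\ref{thm:all-isfacet} and~\ref{thm:pref-isfacet}, the complement is $\PiP$-complete, yielding $\PiP$-hardness of $\atmostkfacets_\sigma$.

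The main subtlety I would watch for is the off-by-one in the bound $k$ on the satisfiability side: Lemma~\ref{lem:sat-kfacets} is stated with $k=2n+m+1$ as the satisfiable count, so I must use $k-1$ as the threshold in the $\atmostkfacets$ instance. Otherwise, all ingredients are already in place, so the remaining work is bookkeeping rather than new combinatorial insight.
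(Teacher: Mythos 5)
Your proposal is correct and follows essentially the same route as the paper, whose one-line proof observes that the complement of $\atmostkfacets_\sigma$ is (up to the $k{+}1$ threshold shift) $\atleastkfacets_\sigma$, so the $\NP$-/$\SigmaP$-completeness of Theorems~\ref{thm:adm-atleastk} and~\ref{thm:pref-atleastk} transfers directly to $\co\NP$-/$\PiP$-completeness under complementation. Your explicit hardness reductions from UNSAT via Lemma~\ref{lem:sat-kfacets} (threshold $2n+m$) and from the complement of $\isfacet_\sigma$ via the copy construction of Claim~\ref{claim:copies} (threshold $n-1$) merely unpack the same underlying reductions the paper invokes implicitly, with the off-by-one handled correctly.
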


\begin{proof}
	The complement $\atleastkfacets_\sigma$ of $\atmostkfacets_\sigma$ is $\NP$-complete (resp., $\SigmaP$-complete) for the mentioned semantics.
\end{proof}

\begin{comment}
	\begin{theorem}\label{thm:pref-atmostk}
		The problem $\atmostkfacets_\sigma$ is $\PiP$-complete for $\sigma\in \{\pref,\semi,\stag\}$.
	\end{theorem}
	%
	\begin{proof}
		The complement $\atleastkfacets_\sigma$ of $\atmostkfacets_\sigma$ is $\SigmaP$.
	\end{proof}
\end{comment}

\subsection{Exact $k$ Facets}
\label{sec:exact}

Perhaps unsurprisingly, $\exactkfacets$ also turns out to be easy for conflict-free and naive semantics.

 \begin{restatable}{theorem}{confexactk}\label{thm:conf-exactk}
   $\exactkfacets_\sigma$ is in $\Ptime$ for $\sigma\in \{\cnf,\nai\}$.
 \end{restatable}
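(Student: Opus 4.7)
The plan is to leverage the syntactic characterizations of $\Facet_\sigma(F)$ that were already established in Remark~\ref{rem:conf-isfacet} and used in the proof of Theorem~\ref{thm:conf-boundk}. Recall that an argument $a$ is a $\cnf$-facet iff $(a,a)\notin R$ (subject to the side condition that at least two such non-self-attacking arguments exist in $F$, since otherwise the unique non-self-attacker would appear in every nonempty conflict-free set). Likewise, $a$ is a $\nai$-facet iff $(a,a)\notin R$ \emph{and} $a$ participates in at least one attack with a distinct argument, that is, there exists some $b\in A\setminus\{a\}$ with $(a,b)\in R$ or $(b,a)\in R$; otherwise $a$ would be contained in every maximal conflict-free set and thus fail to be a facet.

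Given these characterizations, the algorithm for $\exactkfacets_\sigma$ is immediate. On input $(F,k)$ with $F=(A,R)$, we iterate over every $a\in A$ and test membership in $\Facet_\sigma(F)$ by inspecting $R$: for $\sigma=\cnf$ we simply count the arguments $a$ with $(a,a)\notin R$ (returning $0$ when there is at most one such argument), and for $\sigma=\nai$ we additionally require that $a$ appears in some attack with a distinct argument. Let $f$ be the resulting count. Output ``yes'' iff $f=k$.

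Each membership test is polynomial in $|A|+|R|$, and there are only $|A|$ arguments to inspect, so the total running time is polynomial in the size of the input, yielding $\exactkfacets_\sigma\in\Ptime$ for $\sigma\in\{\cnf,\nai\}$.

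There is no real obstacle here: the work has already been done in Remark~\ref{rem:conf-isfacet} and Theorem~\ref{thm:conf-boundk}, which give polynomial-time procedures for deciding whether a single argument is a facet and for computing cardinality bounds on $\Facet_\sigma(F)$. Since these combined procedures in fact compute $|\Facet_\sigma(F)|$ exactly, comparing the value against $k$ trivially decides the exact-count problem within the same complexity class.
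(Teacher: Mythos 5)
Your proposal is correct and follows essentially the same route as the paper: the paper's proof simply says that, as in Theorem~\ref{thm:conf-boundk}, one counts all $\sigma$-facets in polynomial time (non-self-attacking arguments for $\cnf$, additionally discarding arguments not involved in any conflict for $\nai$) and compares the count with $k$. You merely spell out explicitly the syntactic characterization that the paper leaves implicit via Remark~\ref{rem:conf-isfacet} and Theorem~\ref{thm:conf-boundk}, so there is nothing substantively different to report.
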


 \begin{proof}
	Similar to the proof of Theorem~\ref{thm:conf-boundk}, one can count all the $\sem$-facets in polynomial time for $\sem\in\{\cnf,\nai\}$.
\begin{comment}
	For $\cnf$, the AF $F$ has exactly $k$ facets if $F$ contains $k$ non-conflicting arguments (without self-attacks).
	For $\nai$, one additionally has to remove each argument not in conflict with any other argument since it can not be a facet.
	As a result, $F$ has exactly $k$ $\nai$-facets if $F\setminus N$ contains $k$ non-conflicting arguments (without self-attacks), where $C$ includes those arguments which do not participate in any attack.
\end{comment}
\end{proof}

For $\sigma\in \{\adm,\stab,\comp\}$ the problem $\exactkfacets_\sigma$ turns out to be more interesting since it is complete for the comparably esoteric class $\DP$.

\begin{theorem}\label{thm:adm-exactk}
	$\exactkfacets_\sigma$ is $\DP$-complete for $\sigma\in \{\adm,\stab,\comp\}$.
\end{theorem}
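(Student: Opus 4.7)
The plan is to establish membership and hardness separately. For membership in $\DP$, I observe that a framework~$F$ has exactly~$k$ facets iff it has at least~$k$ facets and at most~$k$ facets. By Theorem~\ref{thm:adm-atleastk}, the former is in~$\NP$ for $\sigma\in\{\adm,\stab,\comp\}$, and by Theorem~\ref{thm:all-atmostk}, the latter is in~$\co\NP$ for the same semantics. Taking the intersection yields membership in~$\DP$ directly from its definition.

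For hardness, I would reduce from the canonical $\DP$-complete problem SAT-UNSAT. Given an instance $(\varphi,\psi)$ where $\varphi$ has $n_1$ variables and $m_1$ clauses and $\psi$ has $n_2$ variables and $m_2$ clauses, I construct the AFs $F_\varphi$ and $F_\psi$ via the standard translation of Definition~\ref{def:translation}, ensuring their argument sets are disjoint (rename if necessary). Let $F \dfn F_\varphi \uplus F_\psi$ be the disjoint union, and set $k \dfn (2n_1 + m_1 + 1) + (2n_2 + m_2)$.

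The key step is to argue that for $\sigma\in\{\adm,\stab,\comp\}$, the $\sigma$-extensions of $F$ are exactly the sets of the form $E_1\cup E_2$ where $E_1\in\sigma(F_\varphi)$ and $E_2\in\sigma(F_\psi)$. This follows because there are no attacks between the two components, so conflict-freeness, defense, completeness, and stability all decompose componentwise (for stability, one uses that $F_\varphi$ and $F_\psi$ each admit at least one stable extension by the argument in the proof of Lemma~\ref{lem:sat-kfacets}, Claim~\ref{claim:clause-facets}). Consequently, an argument $a\in A_\varphi$ is a $\sigma$-facet in $F$ iff it is a $\sigma$-facet in $F_\varphi$, and analogously for $F_\psi$, so the set of $\sigma$-facets of $F$ is the disjoint union of the facets of the two components. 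Applying Lemma~\ref{lem:sat-kfacets} to both components, the total facet count equals $(2n_1+m_1+1)+(2n_2+m_2+1)$, $(2n_1+m_1+1)+(2n_2+m_2)$, $(2n_1+m_1)+(2n_2+m_2+1)$, or $(2n_1+m_1)+(2n_2+m_2)$ according to the four sat/unsat patterns. Only the case ``$\varphi$ satisfiable and $\psi$ unsatisfiable'' hits exactly $k$, giving the reduction.

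The main obstacle I expect is the componentwise decomposition lemma for the disjoint union: specifically, verifying it cleanly for each of $\adm$, $\comp$, and $\stab$, and confirming that in every case both components simultaneously yield non-empty extension sets so that no ``degenerate'' counting happens. Once this is in place, the arithmetic with $k$ and the application of Lemma~\ref{lem:sat-kfacets} to each side of the disjoint union finish the reduction routinely.
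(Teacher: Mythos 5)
Your membership argument coincides with the paper's: exactly $k$ facets is the intersection of the $\NP$ problem $\atleastkfacets_\sigma$ (Theorem~\ref{thm:adm-atleastk}) and the $\co\NP$ problem $\atmostkfacets_\sigma$ (Theorem~\ref{thm:all-atmostk}), hence in $\DP$. The hardness part, however, has a genuine gap. With $k_1=2n_1+m_1+1$ and $k_2=2n_2+m_2+1$, the four facet counts of the plain disjoint union $F_\varphi\uplus F_\psi$ are $k_1+k_2$, $k_1+k_2-1$, $k_1+k_2-1$ and $k_1+k_2-2$: the patterns ``$\varphi$ satisfiable, $\psi$ unsatisfiable'' and ``$\varphi$ unsatisfiable, $\psi$ satisfiable'' both yield exactly your target $k=(2n_1+m_1+1)+(2n_2+m_2)$, since $(2n_1+m_1+1)+(2n_2+m_2)=(2n_1+m_1)+(2n_2+m_2+1)$. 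So your claim that only the SAT/UNSAT pattern hits $k$ is an arithmetic slip, and the backward direction of the reduction fails on pairs where $\varphi$ is unsatisfiable and $\psi$ is satisfiable: the disjoint union alone cannot tell \emph{which} formula failed. (Your componentwise decomposition of extensions and facets over the disjoint union is fine, and the paper uses it implicitly too.)

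This is precisely the ``small technical issue'' the paper's proof points out and repairs: it modifies $F_\varphi$ by adding a duplicate $\varphi'$ of the argument $\varphi$, with the mutual attacks $(\varphi,\varphi'),(\varphi',\varphi)$ and an attack $(C,\varphi')$ from every clause argument $C$, so that $\varphi'$ is a $\sigma$-facet iff $\varphi$ is. The $\varphi$-component then contributes $k_1+1$ facets if $\varphi$ is satisfiable and $k_1-1$ otherwise, which separates all four sat/unsat patterns, and exactly $k_1+k_2$ facets in the combined framework holds iff $\varphi$ is satisfiable and $\psi$ is not. An alternative fix would be to break the symmetry on the formula side, e.g.\ replace $\psi$ by $\psi\wedge\hat\varphi$ for a variable-disjoint copy $\hat\varphi$ of $\varphi$, so that $\psi$ satisfiable forces $\varphi$ satisfiable and the bad pattern cannot occur; but some such asymmetry-breaking step is indispensable, and as written your reduction is incorrect.
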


\begin{proof}
	The membership follows directly from $\atleastkfacets_\sem$~(Thm.~\ref{thm:adm-atleastk}) and $\atmostkfacets_\sem$~(Thm.~\ref{thm:all-atmostk}).

	For hardness, we reduce from SAT-UNSAT. 
	To this aim, we utilize Lemma~\ref{lem:sat-kfacets} for an instance $(\varphi,\psi)$ of SAT-UNSAT.
	We assume w.l.o.g. that $\varphi$ and $\psi$ do not share variables.
	Then, $\varphi$ is satisfiable and $\psi$ is not satisfiable iff the AF $F_\varphi$ has $k_1$ facets and $F_\psi$ has $k_2-1$ facets, where $F_i$ is the corresponding AF for $i\in \{\varphi,\psi\}$ with $k_i$ arguments.
	However, there is a small technical issue as the AF $F_\varphi\cup F_\psi$ can not distinguish the failure of the satisfaction of $\varphi$ from that of $\psi$.
	Therefore, we can not simply take the union $F_\varphi\cup F_\psi$ and let the number of facets be $k_1+k_2-1$.
	Nevertheless, we duplicate the argument $\varphi$ in $F_\varphi$ to yield the pair $\varphi, \varphi'$.
	Then $F_\varphi$ includes the additional attacks $(\varphi,\varphi'), (\varphi',\varphi)$ as well as $(C,\varphi')$ for each $C\in \varphi$.
	The resulting AF $F_\varphi$ has $k_1+1$ arguments and $\varphi$ is satisfiable iff $F_\varphi$ has $k_1+1$ facets.
	Note that the newly added argument $\varphi'$ is a $\sigma$-facet iff $\varphi$ is $\sigma$-facet for each $\sigma\in\{\adm,\comp,\stab\}$.
	Then, the theorem follows since $(\varphi,\psi)$ is a positive instance of SAT-UNSAT iff $\varphi$ is satisfiable and $\psi$ is not satisfiable iff $F_\varphi\cup F_\psi$ has exactly $(k_1+k_2)$ $\sigma$-facets for $\sigma\in\{\adm,\comp,\stab\}$.
\end{proof}

We conclude our complexity analysis by stating the  non-tight bounds for $\exactkfacets_\sem$ for the remaining semantics.

 \begin{restatable}{theorem}{prefexactk}\label{thm:pref-exactk}
	$\exactkfacets_\sigma$ is in $\DP_2$ for $\sigma\in \{\pref, \semi, \stag\}$.
\end{restatable}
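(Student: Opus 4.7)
The plan is to obtain membership in $\DP_2$ directly from the two one-sided bounds already established, exploiting the trivial equivalence
\[
    \text{$F$ has exactly $k$ $\sigma$-facets} \iff \text{$F$ has at least $k$} \land \text{$F$ has at most $k$ $\sigma$-facets}.
\]
No new reduction is needed; the membership falls out of the definition of $\DP_2$ as the class of intersections $L_1 \cap L_2$ with $L_1 \in \SigmaPtime{2}$ and $L_2 \in \PiPtime{2}$.

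First, I would invoke Theorem~\ref{thm:pref-atleastk} to deduce that, for $\sigma \in \{\pref, \semi, \stag\}$, the language $\atleastkfacets_\sigma$ belongs to $\SigmaP = \SigmaPtime{2}$ (in fact it is complete for this class, but membership is all we need here). Next, I would invoke Theorem~\ref{thm:all-atmostk} to obtain $\atmostkfacets_\sigma \in \PiP = \PiPtime{2}$ for the same three semantics. With these two facts in hand, I would observe that the input $(F, k)$ lies in $\exactkfacets_\sigma$ iff it lies in both $\atleastkfacets_\sigma$ and $\atmostkfacets_\sigma$, so
\[
    \exactkfacets_\sigma \;=\; \atleastkfacets_\sigma \,\cap\, \atmostkfacets_\sigma,
\]
and by the definition $\DP_2 \eqdef \SB L_1 \cap L_2 \SM L_1 \in \SigmaPtime{2}, L_2 \in \PiPtime{2}\SE$, this intersection lies in $\DP_2$.

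There is essentially no obstacle here; the only thing to watch is that the statement of the theorem only claims membership (not completeness), consistent with the paper's remark that these are ``non-tight bounds'' for $\sigma \in \{\pref, \semi, \stag\}$. A matching $\DP_2$-hardness proof would be substantially harder, since the analogue of Lemma~\ref{lem:sat-kfacets} used for $\sigma \in \{\adm, \stab, \comp\}$ in Theorem~\ref{thm:adm-exactk} relied on the NP-completeness of credulous acceptance for those semantics, whereas for preferred/semi-stable/stage semantics one would instead need a sharp facet-count characterisation driven by a $\SigmaPtime{2}$-complete problem; but this is not required for the stated result and I would leave it as an open question, exactly as the paper does.
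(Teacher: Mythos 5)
Your proposal is correct and is essentially identical to the paper's own argument: the paper likewise obtains membership in $\DP_2$ directly from Theorem~\ref{thm:pref-atleastk} and Theorem~\ref{thm:all-atmostk}, via the observation that $\exactkfacets_\sigma$ is the intersection of $\atleastkfacets_\sigma \in \SigmaPtime{2}$ and $\atmostkfacets_\sigma \in \PiPtime{2}$. Your closing remark about why matching hardness is left open also matches the paper's discussion, so there is nothing to add.
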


\begin{proof}
	Follows directly from $\atleastkfacets_\sem$~(Thm.~\ref{thm:pref-atleastk}) and $\atmostkfacets_\sem$~(Thm.~\ref{thm:all-atmostk}).
\end{proof}

\section{Significance}\label{sec:significance}
Our notion of significance adopts a
decision-driven perspective. We define significance of arguments in terms of the influence of a decision to eliminate the
degree of freedom (on choices of remaining arguments). 
While counting approaches assess the plausibility of arguments in terms of their likelihood of being accepted, we measure how much the acceptance of an argument decreases freedom (or increases the significance of the decision). 
Intuitively, a higher significance score indicates that a specific decision does have a huge influence on the remaining facets. Furthermore, the number
of facets directly measures the \emph{amount of uncertainty} in extensions.
Consider an argument $a$ (as an \emph{opinion} or a \emph{view point}) and denote by $\bar a$ the complement/negation of $a$. 
E.g., an argument is approved ($a$) versus not approved ($\bar a$).
Then, a facet $\ell \in \{a,\bar{a}\}$ can be seen as the \emph{uncertainty} regarding~$a$, since $a$ can either be included in, or be excluded from certain extensions.
We next introduce the notion of \emph{approving} and \emph{disapproving} an argument.
Let $F=(A,R)$ be an AF, and $\sem$ be a semantics.
Recall that $\sem(F)$ denotes the collection of $\sem$-extensions in $F$.
Moreover, $\Cred\sem$ (resp., $\Skep\sem$) denotes the collection of credulously (skeptically) accepted arguments in $F$ under semantics $\sem$.
For an argument $a$, we let $\sem^a(F)$ denote the $\sem$-extensions in $F$ \emph{approving} the argument $a$.
Precisely, we define $\sem^a(F)=\{E\in \sem(F) \mid a\in E\}$.
Moreover, $\sem^{\bar a}(F)=\{E\in \sem(F)\mid a\not\in E\}$ represents the $\sem$-extensions in $F$ \emph{disapproving} $a$.
Now, let $\Cred\sem^a$ (resp., $\Skep\sem^a$) be the arguments in some (all) $E\in \sem^a(F)$.
Finally, $\Facet\sem^a(F)$ denotes the $\sem$-facets by considering only extensions in $\sem^a(F)$ (i.e., $\Cred\sem^a\setminus\Skep\sem^a$).

For an argument $a\in A$ and  $\ell\in\{a,\bar a\}$, we denote $\bar \ell = \bar a$ if $\ell=a$ and $\bar \ell =a$ for $\ell=\bar a$.
We say that $\ell$ is approved iff $\bar\ell$ is disapproved.
Approving a facet $\ell\in \{a,\bar a\}$ reduces the uncertainty regarding the remaining arguments in $A$ by restricting the extensions space to sets (not) containing $a$.
Further, approving $\ell$ can render a facet argument $b\in A$ non-facet.
This holds since, either (C1) $b\in E$ for each $E\in\sem^\ell(F)$ but $b\not\in E$ for each $E\in\sem(F)$, or (C2) $b\not\in E$ for any $E\in\sem^{\ell}(F)$ but $b\in E$ for some $E\in\sem(F)$.
Intuitively, we say that the uncertainty of such an argument $b$ has been \emph{resolved} by approving $\ell$.
Further, we say that approving $\ell$ results in the approval of $b$ in the case of (C1), and disapproval of $b$ if (C2) is the case.

Notice that approving (or disapproving) an argument results in fewer facets for every semantics $\sem$.
That is, the (dis)approval of any argument can not generate new facets.
Intuitively, we have less uncertainty than before after we (dis)approve certain arguments.
Precisely, we have the following lemma.

\begin{restatable}{lemma}{LessFacets}
\label{lem:facet-arguments}
	For any argument $a\in A$ and semantics $\sem$, $\Facet\sem^a(F)\subseteq \Facet\sem(F)$.
\end{restatable}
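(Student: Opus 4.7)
The plan is to unfold the definitions of $\Facet\sem^a(F)$ and $\Facet\sem(F)$ and exploit the trivial inclusion $\sem^a(F) \subseteq \sem(F)$, since restricting to extensions that contain $a$ can only remove, not add, extensions. The existence of a witness extension in the smaller family $\sem^a(F)$ automatically yields a witness in the larger family $\sem(F)$, so both credulous membership and ``not skeptically accepted'' transfer outward.

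Concretely, I would fix an arbitrary $b \in \Facet\sem^a(F) = \Cred\sem^a \setminus \Skep\sem^a$. Then by definition there is some $E_1 \in \sem^a(F)$ with $b \in E_1$, and some $E_2 \in \sem^a(F)$ with $b \notin E_2$. Next I would invoke $\sem^a(F) \subseteq \sem(F)$: the extension $E_1$ witnesses $b \in \Cred\sem$, while $E_2$ is a $\sem$-extension of $F$ not containing $b$ and hence witnesses $b \notin \Skep\sem$. Combining the two conclusions gives $b \in \Cred\sem \setminus \Skep\sem = \Facet\sem(F)$, establishing the inclusion.

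The argument is essentially a one-line unwinding of the definitions, so there is no real obstacle; the only thing worth being careful about is that both the ``some'' witness and the ``no skeptical'' witness need to come from the restricted family $\sem^a(F)$, but this is exactly how $\Cred\sem^a$ and $\Skep\sem^a$ were defined. It may be worth briefly remarking (as intuition for the reader) that the statement captures the natural fact that conditioning on $a$ being approved cannot introduce new uncertainty: any facet after conditioning must already have been a facet before, which also justifies the informal claim in the surrounding text that approving arguments only reduces the facet count.
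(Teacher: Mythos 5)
Your proof is correct and takes essentially the same route as the paper: both arguments boil down to the inclusion $\sem^a(F)\subseteq\sem(F)$ and the transfer of witness extensions, the only difference being that the paper phrases it contrapositively (a non-facet of $F$ remains a non-facet after conditioning on $a$) while you argue the inclusion directly. No gaps.
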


\begin{proof}
	Let $b\in A$ be such that $b\not\in \Facet\sem(F)$.
	If $b\in E$ for all $E\in\sem(F)$, then $b\in E$ for all $E\in\sem^a(F)$ as well, hence $b\not\in\Facet\sem^a(F)$.
	Conversely, if $b\not\in E$ for any $E\in\sem(F)$, then $b\not \in E$ for any $E\in\sem^a(F)$ as well, hence again $b\not\in\Facet\sem^a(F)$.
\end{proof}

Let $\sem$ be a semantics, $a\in A$ be a $\sem$-facet and $\ell\in\{a, \bar a\}$.
The observation that ``$\ell$ reduces the uncertainty among remaining arguments'' leads to the notion of \emph{significance} of $\ell$ 
under semantics $\sem$.
For an AF~$F$, we define:

\begin{equation}\label{eq:significance}
  \fimp_\sem[F,\ell] \eqdef
  \frac{\Card{\FA_\sem(F)} -  \Card{\FA_\sem^\ell(F)}}{\Card{\FA_\sem(F)}}.
\end{equation}

Intuitively, approving an argument $a$ is less significant if many uncertain arguments (facets) remain in $\Facet\sem^{a}(F)$.
Similarly, disapproving $a$ (and thus approving $\bar a$) is less significant if many facets remain in  $\Facet\sem^{\bar a}(F)$.

\begin{example}[Arguments Significance]\label{ex:sig}
	Reconsider the AF $F$ from Example~\ref{ex:limits} with stable extensions $\stab(F) = \{\{w, m, p\}, \{s, b, p\}, \{s, b, t\}\}$.
	While Example~\ref{ex:running} gave an intuition of significance,   Table~\ref{tab:ex tech} presents precise values for each argument.
	As outlined,  
	the argument $w$ has score $1$, and is thus more significant than $\bar w$ (score $2/3$).
	The argument $e$ not being a $\stab$-facet is excluded for significance reasoning.

	\begin{table}
		\centering
		\begin{tabular}{c@{}c@{\hspace{8pt}}c@{\hspace{8pt}}c@{\hspace{8pt}}c}
		  	\toprule
			$\ell \in$ & $\{w, m, t, \bar s, \bar b, \bar p\}$ & $\{s, b, \bar w, \bar m\}$ & $\{p, \bar t\}$ 
			\\
			\midrule
			$\Card{\FA_\stab^\ell(F)}$ 
			& $0$ & $2$ & $4$ 
			\\
			$\fimp_\stab[F,\ell]$ 
			& $1$ & $\frac{2}{3}$ & $\frac{1}{3}$  
			\\
			\bottomrule
		\end{tabular}
		\caption{Argument significance for the AF from Example~\ref{ex:sig}. 
		}
		\label{tab:ex tech}
	\end{table}
\end{example}

\subsection*{Computing Significance for Arguments}

Let $F=(A,R)$ be an AF, $\sem$ be a semantics and $a\in A$ be an argument.
Observe that the computation of $\fimp_\sem[F,\ell]$ (Equation~\ref{eq:significance}) requires counting facets in $\Facet\sem(F)$ and $\Facet\sem^\ell(F)$.
We argue that one can count the number of $\sem$-facets in a framework $F=(A,R)$ without having to enumerate or count all $\sem$-extensions explicitly.
In fact, $\Facet\sem(F)$ can be computed by asking $\isfacet_{\sigma}$ for each argument $a\in A$, which requires $|A|$-many queries.
Moreover, one can also count the exact facets in $\Facet\sem^\ell(F)$ without having to explicitly identify all $\sem^\ell$-extensions.
Observe that $\Facet\sem^\ell(F)$ corresponds to the result of remaining facets after approving $\ell$.

\begin{theorem}\label{thm:sig}
	Let $\sem$ be a semantics and $F=(A,R)$ be an AF.
	For every $a\in A$ and $\ell\in \{a,\bar a\}$ the sets $\Facet\sem(F)$ and $\Facet\sem^\ell(F)$ can be computed by a deterministic polynomial-time Turing machine with access to
	\begin{itemize}
        \item an $\NP$ oracle 
          for $\sem\in \{\adm,\comp,\stab\}$.
        \item 
        a $\SigmaPtime2$ oracle, 
          for $\sem\in \{\pref,\SemiSt,\stag\}$.
	\end{itemize}
\end{theorem}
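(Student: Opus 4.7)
The plan is to reduce the computation of both $\Facet\sem(F)$ and $\Facet\sem^\ell(F)$ to a polynomial number of oracle queries, one per argument $b \in A$. For $\Facet\sem(F)$, this is immediate: by Theorems~\ref{thm:all-isfacet} and~\ref{thm:pref-isfacet}, the problem $\isfacet_\sem$ lies in $\NP$ for $\sem\in\{\adm,\comp,\stab\}$ and in $\SigmaP$ for $\sem\in\{\pref,\semi,\stag\}$. A deterministic polynomial-time machine therefore iterates over $b\in A$, issues an $\isfacet_\sem$ query with instance $(F,b)$, and collects the positive answers. This already matches the $\DeltaP2$ / $\DeltaP3$ bound for half of the claim.

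The remaining work is to show that deciding $b\in\Facet\sem^\ell(F)$ for a given $b$ lies in the same complexity class as $\isfacet_\sem$. By definition, $b\in\Facet\sem^\ell(F)$ iff there exist $\sem$-extensions $E_1,E_2\in\sem(F)$ both respecting the $\ell$-condition (i.e., $a\in E_i$ if $\ell = a$, and $a\notin E_i$ if $\ell = \bar a$), with $b\in E_1$ and $b\notin E_2$. This suggests a natural guess-and-check procedure analogous to the membership arguments in the proofs of Theorems~\ref{thm:all-isfacet} and~\ref{thm:pref-isfacet}: nondeterministically guess $E_1,E_2\subseteq A$, check the polynomial-size conditions on $a$ and $b$, and verify that each $E_i$ is a $\sem$-extension.

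The verification cost is the crux, and it depends on $\sem$. For $\sem\in\{\adm,\comp,\stab\}$, extension verification is in $\Ptime$, so the whole procedure is in $\NP$; for $\sem\in\{\pref,\semi,\stag\}$, verification reduces to a $\co\NP$ test (non-existence of a strictly larger admissible set or of a strictly larger range), placing the procedure in $\NP^{\NP}=\SigmaP$. In either case, the ``restricted facet'' decision problem lives in exactly the same class as $\isfacet_\sem$, so it too is resolvable by a single oracle query per argument.

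Putting the pieces together, the algorithm runs in deterministic polynomial time making $O(|A|)$ calls to an $\NP$ (respectively $\SigmaP$) oracle, which yields the claimed membership in $\DeltaP2$ (respectively $\DeltaP3$). The main delicate point is the verification step for the preferred, semi-stable, and stage cases: one must ensure that the subset- or range-maximality check consumes only one layer of $\co\NP$ reasoning so that composition with the outer existential guess remains within $\SigmaP$ rather than climbing to $\SigmaPtime3$. Since the witness for non-maximality is simply a strictly dominating admissible set (a polynomial-size certificate checkable in $\Ptime$), this composition is clean and the stated bounds are preserved.
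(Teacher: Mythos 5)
Your proposal is correct and follows essentially the same route as the paper: one oracle call per argument $b\in A$, where the oracle nondeterministically guesses two $\sem$-extensions $E_1\ni b$, $E_2\not\ni b$ (additionally constrained by the $\ell$-condition for $\Facet\sem^\ell(F)$), with verification in $\Ptime$ for $\sem\in\{\adm,\comp,\stab\}$ and in $\co\NP$ for $\sem\in\{\pref,\semi,\stag\}$, giving the $\DeltaP2$ / $\DeltaP3$ bounds. Your observation that the restricted-facet test stays in the same class as $\isfacet_\sem$ is exactly the paper's Step~2a remark that the extra membership checks do not increase the cost.
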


\begin{proof}
	Given an AF $F=(A,R)$, semantics $\sem$ and argument $a\in A$.
	To compute $\Facet\sem(F)$, we consider the following procedure.
	For each $b\in A$:
	\begin{enumerate}
		\item Guess two sets $E_1,E_2 \subseteq A$,
		\item Check that $b\in E_1$, $b\not \in E_2$,
		\item Check that $E_i\in \sem(F)$,
		\item Answer ``Yes'' if each check is passed in Step $2-3$.
	\end{enumerate}
	This procedure is repeated polynomially-many times (precisely $|A|$-many times).
	As a post-processing, count the number of arguments $b\in A$, for which Step-4 answers ``Yes''.
	The 2nd step requires non-determinstic guesses, whereas the 3rd step needs (1) $\Ptime$ for $\sem\in \{\adm,\comp,\stab\}$ and (2) $\co\NP$, for $\sem\in \{\pref,\SemiSt,\stag\}$.
	Step-4 is again a final post-processing. 
	As a result, the procedure overall runs in the mentioned runtime for corresponding semantics.
	That is, $\Ptime$-time with an $\NP$ oracle for $\sem\in \{\adm,\comp,\stab\}$ and  ${\SigmaP}$-oracle, for $\sem\in \{\pref,\SemiSt,\stag\}$.
	
	To compute $\Facet\sem^\ell(F)$, we additionally include the following check to the Step-2 in the above procedure.
	\begin{enumerate}
		\item[2a.] Check that $a\in E_1$, $a\in E_2$ if $\ell = a$, and check $a\not\in E_1$, $a\not\in E_2$ if $\ell =\bar a$.
	\end{enumerate}
	This does not increase runtime, and completes the proof. 
\end{proof}
	Observe that one can not expect to lower the runtime in Theorem~\ref{thm:sig} (e.g., to $\NP$ in the case of $\sem\in \{\adm,\comp,\stab\}$).
	Intuitively, although each step (Step~$1-4$ in the proof of Theorem~\ref{thm:sig}) requires $\NP$-time, the final post-processing needs counting the number of arguments for which each check is passed.
	In fact, this would contradict Theorem~\ref{thm:adm-exactk} (unless $\NP=\DP$) since one can count all facets in an AF and determine whether this number equals $k$. 

\section{Implementation and Experiments}

\paragraph{Implementation}
We implemented counting of extensions and facets for various
semantics into our tool called \fargu (Facets for Reasoning and
Analyzing Meaningful Extensions).
We build on the \aspartix system, an ASP-based
argumentation system for Dung style abstract argumentation and
extensions thereof~\cite{EglyGagglWoltran08}.
We employ \aspartix's ASP encoding and take the ASP solver \clingo{}
version~5.7.1~\cite{GebserKaufmannSchaub12a} to compute credulous and
skeptical consequences and take set differences.
We leverage ASP as the solvers have native support for
enumerating consequences without exhaustive enumeration of all answer-sets~\cite{AlvianoDodaroFiorentino23a,GebserKaufmannSchaub09a}.

\paragraph{Design and Expectations}
We ran our experiments on a Ubuntu~11.4.0 Linux~5.15 computer with an
eight core Intel i7-14700 CPU 1.5~GHz machine with 64GB of RAM.
Each run is executed exclusively on the system.
To illustrate that we can count facets on practical instances and
obtain insights over counting, we take the admissible, stable
semantics, and semi-stable semantics over instances from the 3rd
International Competition on Computational Models of Argumentation
(ICCMA'19)~\cite{bistarelli2019third}.
This gives us 326 different argumentation frameworks of varying sizes
in the range of 0 to 10.000 arguments with an average of 800.3
arguments and a median of 160.0 arguments.
We take admissible, stable, and semi-stable semantics as
representative from each different level of hardness (see
Table~\ref{tab:results}).
We take the 2019 competition as instances are of reasonable size,
runtime, number for the scope of this experiment, and we can use input
instances without further modification.
We limit the runtime on each instance to 60 seconds for sustainability
reasons, as differences become visible already with the limitation,
and as a user might not want to wait long when investigating search
spaces. 
We collect the number of extensions and facets and measure the solver 
runtime. We have the following expectations:
  (i)~computing facets is faster than enumerating extensions,
  (ii)~facets are still accessible when the number of extensions is
  very high and enumeration takes longer runtime,
  (iii)~even when there are many extensions, there are
  reasonably small number of facets.

\newcommand{\mnull}[0]{\ensuremath{\cdot 10^0}}
\newcommand{\fff}[0]{\text{f}}
\begin{table}
  \centering
  \begin{tabular}{l@{~}r@{~~}c@{~~}Hr@{~~~}r@{~~}r@{~~~~}r@{~~~}}
    \toprule 
    $\sigma$      & n              & I              & s & \#$_e$                     & \#$_\fff$ & $t_{e}$ & $t_\fff$ \\
    \midrule                                                                             
    $\adm^\star$  & 128            & $[10^6,\cdot)$ & 0 & $^\dagger189.0 \cdot 10^6$ & $165.1$   & $61.0$  & $0.5$    \\
    $\adm$        & 11             & $[10^6,\cdot)$ & 1 & $27.1\cdot 10^6$           & $40.6$    & $6.2$   & $1.3$    \\
    $\adm$        & 187            & $(0,10^6)$     & 1 & $29.4 \cdot 10^3$          & $266.7$   & $0.8$   & $1.4$    \\
    $\stab^\star$ & 1              & $(0,10^6)$     & 0 & $^\dagger3.0 \mnull$       & ---       & $61.0$  & $61.0$   \\
    $\stab$       & $^\ddagger$310 & $(0,10^6)$     & 1 & $36.0 \mnull$              & $43.1$    & $0.7$   & $0.8$    \\
    $\stab$       & 14             & $[0,0]$        & 1 & $0.0 \mnull$               & $0.0$     & $1.4$   & $1.2$    \\
    $\semi^\star$ & 9              & $(0,10^6)$     & 0 & $^\dagger43.8 \mnull$      & $79.9$    & $61.0$  & $49.4$   \\
    $\semi$       & 177            & $(0,10^6)$     & 1 & $50.9 \mnull$              & $33.9$    & $3.7$   & $2.1$    \\
    $\semi^\star$ & 140            & $[0,0]$        & 0 & $^\dagger0.0 \mnull$       & ---       & $61.0$  & ---      \\
    \bottomrule
  \end{tabular}
  \caption{%
    Overview of results on enumerating extensions and
    computing facets for semantics~$\sigma\in\{\adm,\semi,\stab\}$ where
    $^\star$ marks timeouts (in order to distinguish the cases
    with/without timeout on enumerating extensions)
    and the columns contain
    the total number~$n$ of argumentation frameworks with
    the interval~$I=[a,b)$ referring to \mbox{$a \leq \#_e < b$};
    the average number~$\#_{e}$ and $\#_\fff$ of extensions/facets; 
    the average  runtime~$t_e$ and $t_\fff$ for enumerating extensions ($e$) / computing facets (f).
    The symbol $^\dagger$ illustrates that there is only a lower bound as computation did not finish.
    We excluded one instance ($^\ddagger$) due to timeout when computing credulous/skeptical extensions.
  }
  \label{tab:exps}
\end{table}

\paragraph{Observations and Summary} Table~\ref{tab:exps} presents a survey of our results. Details on the evaluation are available in the
supplemental data.
We see that the number of admissible extensions can be larger than
$10^6$ and clingo fails to enumerate all extensions ($^\star$) if the
number of extensions is very high.
For the admissible extensions, we observe that even if the number of
extensions is quite high, the number of facets remains reasonably
small making it interesting to diversify extensions, or investigate
more details about those arguments, which still allow flexibility.
The observations confirm our expectations for the admissible
semantics.
However, we gain only limited insights for the semantics~$\stab$ and
$\semi$.
Here the number of extensions is fairly low and the solver either
manages to enumerate all extensions, or already fails to solve one.

\section{Conclusion}
We defined a new perspective on exploring significance of arguments in
extensions of an abstract argumentation framework.
We present a comprehensive complexity analysis for deciding whether an
argument is a facet ($\isfacet$) and deciding whether an argumentation
framework has at least~$k$ facets ($\atleastkfacets_\sigma$), at
most~$k$ facets ($\atmostkfacets_\sigma$), and exactly~$k$ facets
($\exactkfacets_\sigma$).
We establish that the complexity ranges between $\Ptime$ and $\DP_2$,
including tight lower bounds for most cases (see
Table~\ref{tab:results}).
While our primary focus lies on establishing a comprehensive
complexity picture, our implementation allows computing the number of facets practically
for concrete abstract argumentation frameworks building on top of
existing solvers.

For future work, we plan to investigate techniques whether
significance originating from facets can be extended to arguments
depending on each other and notions of fairness in argumentation
frameworks.
We also believe that the missing case for
$\sigma\in \{\pref, \semi,\stag\}$ remains interesting to study. We
expect that the problem is also hard, as the decision problems for at
least and at most are ${\SigmaP}$- and $\PiP$-complete.
From a practical perspective, we believe that it would be interesting
to integrate facet-based reasoning and significance computation into modern SAT-based argumentation solvers.
Moreover, investigating facets for other formalisms such abductive
reasoning~~\cite{MahmoodMeierSchmidt20} or default
logic~\cite{FichteHecherSchindler22} seems interesting as well as
closing the gap to the topic of
inconsistencies~\cite{FichteHecherMeier21,FichteHecherSzeider23}.

\section*{Acknowledgments}
The work has received funding from 
the Austrian Science Fund (FWF), grants J 4656 and P 32830, 
the Deutsche Forschungsgemeinschaft (DFG, German Research Foundation), grants TRR 318/1 2021 – 438445824 and ME 4279/3-1 (511769688), the European Union's Horizon Europe research and innovation programme within project ENEXA (101070305), 
the Society for Research Funding in Lower Austria (GFF, Gesellschaft für Forschungsf\"orderung N\"O), grant ExzF-0004, 
the Swedish research council under grant VR-2022-03214,
as well as the Vienna Science and Technology Fund (WWTF), grant ICT19-065, and by the Ministry of Culture and Science of North Rhine-Westphalia (MKW NRW) within project WHALE (LFN 1-04) funded under the Lamarr Fellow Network programme,
and ELLIIT funded by the Swedish government.

Johannes and Yasir appreciate valuable discussions with Lydia Blümel and Matthias Thimm  (Fern Universität in Hagen) on an alternative proof to Theorem 21.

\bibliographystyle{abbrv}
\bibliography{argu_facets_ijcai25}
\end{document}